\title{Causal Discovery in Mixtures of Populations}
\author[1]{\href{mailto:<bijan@dartmouth.edu>?Subject=Causal Discovery in Mixtures of Populations}{Bijan H. S. Mazaheri}{}}
\author[2]{Spencer Gordon}
\author[3]{Yuval Rabani}
\author[4]{Leonard Schulman}
\affil[1]{%
    Thayer School of Engineering\\
    Dartmouth College\\
    Hanover, New Hampshire, USA
}
\affil[2]{%
    Fragment Data Technologies\\
    San Francisco, California, USA
}
\affil[3]{%
    Department of Computer Science\\
    Hebrew Univeristy of Jerusalem\\
    Jerusalem, Israel
  }
\affil[4]{%
    Computing and Mathmematical Sciences\\
    California Institute of Technology\\
    Pasadena, California, USA
  }
\begin{document}
\maketitle

\begin{abstract}
  Causal discovery aims to learn causal structures up to certain symmetries. Diverse populations or changing environments give rise to heterogeneous data in the following sense: each population/environment is a ``source'' which idiosyncratically determines the forms of causal effects. From this perspective, the source is a latent common cause for every observed variable. While some methods for causal discovery can work around latent confounding in special cases, a global confounder poses a significant challenge. The only known ways to deal with latent global confounding involve making assumptions that limit structural equations and/or noise functions. We demonstrate that globally confounded causal structures can still be identified with arbitrary structural equations and noise functions, so long as the number of latent classes remains small relative to the size and sparsity of the underlying DAG. The approach relies on agglomerating variables into large-enough matrices of moments, whose ranks directly reveal graphical properties of the causal structure. We also provide a statistical test to test the rank of these matrices.
\end{abstract}

\section{INTRODUCTION}
Many approaches to studying causal systems use structural causal models (SCMs) to graphically model causal relationships in a directed acyclic graph (DAG) \citep{pearl2009causality}.  In an SCM,  $A \rightarrow B$ indicates ``$A$ has a direct causal effect on $B$.'' 

Formally, ``d-separation rules''\footnote{See \citet{pearl2009causality} for a review of d-separation or Appendix~\ref{apx: dsep} for a summary of important results used in this manuscript.} give graphical criteria for statistical independence and dependence under the assumptions of faithfulness (independence implies d-separation) and the causal Markov condition (d-separation implies independence). Constraint-based causal discovery utilizes this connection to uncover information about an unknown causal structure \citep{spirtes2000causation}. 

A critical assumption for many causal discovery algorithms is ``causal sufficiency,'' i.e., no unobserved confounding. In most real-world settings, a causal system spans both observed and latent variables. To address this, approaches such as FCI~\citep{spirtes2001anytime} have been developed to uncover equivalence classes of causal structures and the existance of pairwise confounding. In the parametric setting, more ``pervasive'' confounding has been studied under some parametric restrictions on its influence \citep{frot2019robust, cai2023causal, 10.1093/jrsssb/qkad071}.

A ``flipped'' version of this problem is studied in causal representation learning (CRL) \citep{scholkopf_toward_2021, squires2023linear} and latent variable models \citep{silva2006learning, anandkumar2013learning, xie2020generalized, huang2022latent}, which seek to uncover the structure of \emph{latent} variables using \emph{known} (empty or sparse) structure within the \emph{observed} variables. A question of particular theoretical interest in both problems is that of ``identifiability,'' i.e., the conditions under which a structure is uniquely recoverable.

There has been limited exploration into how known relationships \emph{between latent and observed variables} can be utilized to learn more about a system. The simplest setting is causal discovery under pervasive confounding. Pervasive confounding is generally detrimental to identifiability because it exerts a broad confounding influence on the system. However, this known pervasiveness also means that the system contains significant information about the confounding, which makes its effects easier to learn and remove.

The current approaches to causal discovery in the presence of pervasive confounding are parametric and provide limited insight into the identifiability trade-offs involved. No approaches have yet addressed the issue of a \emph{discrete} universal (i.e., edges to all observable variables) latent confounder. Discrete confounding allows for a simpler quantification of the ``amount'' of unobserved confounding: the number of latent populations or mixture components. Quantifying the relationship between confounding and the resulting identifiability requirements on the observed system is of particular theoretical interest.

Practically, discrete confounding is related to the study of mixture models. Combined data from multiple sources or populations often gives rise to such a setting. In many mixture problems, data is also discrete, such as the study of single-nucleotide polymorphisms (or SNPs) in heterogeneous populations.

\subsection{Problem Statement}
\begin{figure}[h]
    \centering
    \scalebox{.67}{
    \begin {tikzpicture}[-latex ,auto ,node distance =1.5 cm and 1.5 cm ,on grid , semithick, state/.style ={ circle, draw, minimum width =.5 cm}, cstate/.style ={ circle, draw, minimum width =.5 cm, ultra thick}]
            \filldraw[color=blue, fill=blue!5, very thick](-2.1,-3.6) rectangle (2.1, .6)
            node[right] {$\G'$};
            \filldraw[color=red, fill=red!5, very thick](-2,-3.5) rectangle (2, -1)
            node[right] {$\G$};
            \node[state, color=blue, dashed] (U1) {$U$};
            \node[state] (X1)  [below left =of U1]{$V_1$};
            \node[state] (X2) [right =of X1]{$V_2$};
            \node[state] (X3) [right  =of X2] {$V_3$};
            \node[state] (X4) [below  =of X1] {$V_4$};
            \node[state] (X5) [right  =of X4] {$V_5$};
            \node[state] (X6) [right  =of X5] {$V_6$};
            \path[color=blue] (U1) edge (X1);
            \path[color=blue] (U1) edge (X2);
            \path[color=blue] (U1) edge (X3);
            \path[color=blue] (U1) edge (X4);
            \path[color=blue] (U1) edge[bend left =30] (X5);
            \path[color=blue] (U1) edge[bend left = 10] (X6);
            \path (X1) edge (X2);
            \path (X2) edge (X6);
            \path (X4) edge (X1);
            \path (X4) edge (X5);
            \path (X5) edge (X6);
    \end{tikzpicture}
    }
    \caption{The goal is to learn the graph structure $\G$ \emph{without} observing $U$. } \label{fig: example_setup} 
  \end{figure}

Suppose we augment a DAG $\G= (\vec{V}, \vec{E})$ with an unobserved $U$ that has $k$ ``latent classes'', each of which exerts a distinctive signal on all the observed variables $\vec{V}$. The result is a mixture model whose DAG, 
$\G' = (\vec{V} \cup \{U\}, \vec{E}')$ includes additional arrows from $U$ to every $V \in \vec{V}$.  See Figure~\ref{fig: example_setup} for an example. We will refer to $\G$ as the observed sub-graph, and we will use $\Delta := \max_{V \in \vec{V}} \deg^{\G}(V)$ to denote its maximum (in plus out) degree.

The goal is to uncover the observed sub-graph's structure $\G$ up to a Markov equivalence class (MEC) (i.e., a CP-DAG) using statistics gained from the ``observed marginal distribution,'' i.e., $\Pr(\vec{V})$ marginalized over $U$.  Notice that $U$ confounds all pairs of variables in $\vec{V}$, which requires it to be in \emph{every} separating set.  This means that the correspondence between conditional independence and d-separation is no longer adequate to make any deductions about causal structure.

\paragraph{Assumptions}
We will assume (1) that the distribution is faithful\footnote{The precise assumption is a slight extension of faithfulness to the mixture setting, discussed later.} with respect to $\G'$, (2) that $U$ is discrete with a known number of latent classes $k$, and (3) that there are no latent variables other than $U$ (i.e., causal sufficiency holds for $\G'$).  Throughout, all variables are discrete: the observables $\vec{V}$ are categorical (we will focus on binary $\vec{V}$) and $U$ is also categorical with $k$ classes. $k$ therefore represents the ``complexity'' of unobserved confounding, as more latent classes are capable of exerting ``more complex'' signals. Since our deductions rely on each value of $U$ having distinct effects on the observables, the most difficult case arises for Bernoulli observable variables expressed by a single parameter.

We make no parametric assumptions (e.g, linearity of the structural equations or Gaussian noise) on the observed portion of the graph. The only mild parametric assumption is the categorical nature of $U$. Our approach does not require Bernoulli observables, but is built around this especially difficult case to achieve more general identifiability results.

\subsection{Contributions}
\paragraph{Tests of Rank.} Conditional independence testing is considered a crucial component of constraint-based methods for non-parametric causal discovery \citep{spirtes2000causation, squires2023causal}. By studying a problem for which there are \emph{no conditional independencies}, we show that conditional independence tests are a special case of a broader set of tests that may be modified based on parametric assumptions about unobserved confounding. These modifications can be made while avoiding parametric assumptions on the observable variables. In this paper, we use a test for matrix rank and develop a hypothesis test for it based on the work of \citet{ratsimalahelo2001rank}. We demonstrate that this approach is superior to thresholding singular values, as in \citet{anandkumar2012learning}, and note that it may be useful in other settings that use matrix rank, such as \citet{squires2023linear}. The test can be accessed using \texttt{pip install probrank}.

\paragraph{Agglomeration in Structure Learning.} Tests of matrix rank are usually used in ``large alphabet'' settings, where random variables can be reduced to categorical variables with large cardinality \citep{anandkumar2010high, anandkumar2012method, anandkumar2012learning}. To extend the use of these tests to a ``small alphabet'' setting (i.e., Bernoulli observables), we show how to group or ``agglomerate'' simple variables into ones with larger alphabets. This process is nontrivial, since the properties of these agglomerated tests must then be disentangled to make conclusions about the original Bernoulli random variables. As such, the paper makes significant theoretical contributions to the conditional independence (and matrix rank) properties of agglomerated variables, which may have use in areas of causal abstraction \citep{beckers_abstracting_2019, beckers_approximate_2020} and causal representation learning \citep{scholkopf_toward_2021}.

\paragraph{Algorithmic and Theoretical Contributions.} Our main result is, to our knowledge, the first algorithm for identifying causal structures in this mixture setting without parametric assumptions on the observed variables. The correctness of this algorithm and the uniqueness of its output provide sufficient conditions for identifying causal structures from data across multiple populations.

The identification of a causal structure (or equivalence class) is based on the ability to eliminate candidate structures based on the observed properties of the probability distribution. A larger $k$ therefore reduces identifiability by increasing the support of valid probability distributions. For example, $k \geq 2^{\abs{\vec{V}}}$ is sufficient to model \emph{any} marginal distribution on Bernoulli variables $\vec{V}$, even with a fully underpowered $\vec{E} = \emptyset$. To see this, assign one value of $u$ to each of the $2^{\abs{\vec{V}}}$ possible assignments to the $\abs{\vec{V}}$ observed binary variables. By changing the relative probabilities of $U=u$, we may specify any marginal distribution. Since any distribution can be generated in this way (even when the graph is empty), such a large $k$ clearly prohibits identifiability.

Identification of a causal structure also depends on the number of observed variables $\abs{\vec{V}}$ and the sparsity of $\G$. A larger $\abs{\vec{V}}$ provides more information about $U$, and a sparser $\G$ further restricts the possible distributions generated (e.g., a complete graph may generate any marginal distribution even with $k=1$). A key information-theoretic question is how the identifiability requirements for $\G$ scale relative to $k$, $\abs{\vec{V}}$, and the maximum degree $\Delta$. To this end, the conditions under which our algorithm recovers a unique (and correct) MEC---established later via Theorem~\ref{lem: phase II runtime and vertex requirement}---also yield the following identifiability result.

\begin{corollary}\label{thm: ident}
 Consider $\G = (\vec{V}, \vec{E})$ with mixture source $U\in \{1, \ldots, k\}$ and degree bound $\Delta$.  If \[\abs{\vec{V}} \geq (\Delta^3 + 2 \Delta^2 + 4 \Delta + 2) \lceil \lg(k + 1) \rceil + 2 \Delta^2 + 2\Delta^3,\] then $\G$ is generically\footnote{Generic identification means that the unidentifiable cases have Lebesgue measure 0 in the space of parameters.} identifiable up to its MEC.  
\end{corollary}

In practice, identifiability often occurs in far smaller graphs, as later experiments with $k=2$ and only $7$ vertices demonstrate. While loose, this is the best and only known bound to guarantee identifiability.

We achieve this identifiability result by developing an algorithm that repeatedly identifies a mixture of $k$ product distributions (which we call $k$-MixProd) using an oracle that takes time $\tau$. We also use an oracle that can solve for non-negative rank in time $\rho$. Our algorithm runs in time $\abs{\vec V}^{\mathcal{O}(\Delta^2 \log(k))}\rho +\mathcal{O}((\Delta^2 \lg(k)) + \abs{\vec{E}})k 2^{\Delta^2}) \tau$.  Both Corollary~\ref{thm: ident} and the associated runtime are a consequence of the algorithm and its requirements for its success, as outlined in Theorem~\ref{lem: phase II runtime and vertex requirement}.
See Appendix~\ref{apx: further runtime discussion} for further discussion on the runtime of $\rho$ and $\tau$.

\subsection{Related Works}
\paragraph{Causal discovery.}
The PC algorithm was the first causal discovery algorithm to use conditional independence, as outlined by \citet{spirtes2000causation}. Many causal discovery algorithms have since been developed, summarized in \citet{squires2023causal}. A number of algorithms address the presence of latent confounding using one of two assumptions that ``limit'' the confounding influence: (1) low out-degree latents or (2) parametric limitations in the random variables.

The first type of assumption involves limiting the \emph{out-degree} of latent confounding. For example, the FCI algorithm can detect the presence of unobserved confounders that act on \emph{only two} observed variables \citep{spirtes1993discovery, spirtes2001anytime}.  \citet{richardson2002ancestral}'s seminal work introduced ancestral graphs for the general study of this setting. It is important to note that ancestral graphs are larger equivalence classes than MECs. As such, these approaches recover the structure at a coarser resolution, and do so under assumptions that do not hold in the mixture setting.

\paragraph{Global Confounding.}
Recent work has studied ``global'' or ``pervasive'' confounding \citep{gordon2023causal, frot2019robust, 10.1093/jrsssb/qkad071}. Pervasive confounders affect all observable variables in a system and are often observed when data are gathered over a large area or over a long period of time. Such confounding is graphically modeled as a single unobserved variable that links every observable variable (see Figure 1). This structure d-connects all possible pairs of vertices, making all variables statistically dependent even after conditioning on any other variables, which rules out the use of the PC algorithm. 

Pervasively confounded DAGs cannot be learned using only conditional independence tests. Instead, these settings require parametric assumptions that restrict the space of probability distributions. For example, \citet{frot2019robust} was able to show superior performance in settings with large-degree confounders with linear relationships and additive sub-Gaussian or elliptical noise.  Other settings include linear structural equations with non-Gaussian additive noise \citep{cai2023causal} and non-linear structural equations within a finite-dimensional Hilbert space  \citep{10.1093/jrsssb/qkad071}.

The setting studied in this paper involves a discrete unobserved pervasive confounder that imposes a ``mixture'' or ``latent class''. Methods designed for continuous latent confounders are ill-equipped to exploit the special properties of mixture models. Furthermore, the methods rely on parametric assumptions about the observed variables. Since our approach requires no parametric assumptions on the observed variables, it provides a clearer picture of the identifiability conditions in a more general setting.

\paragraph{k-MixProd.} Almost all research on mixtures with categorical data leverages the assumption of mutually independent observed variables within the source distributions \citep{FM99, CGG01, CR08, FOS08, RSS14, ChenMoitra19, gordon2021source, gordon2024identification} (see also the earlier seminal work of \citet{KMRRSS94}). We call this the $k$-MixProd problem (where $k$ denotes the number of latent classes), whose identifiability is discussed in Appendix~\ref{apx: kmixprod}.

$k$-MixProd assumes that the causal model within each source is an empty graph and aims to learn the conditional distributions within each mixture component. Early work by \citet{anandkumar2012learning} and \citet{gordon2023causal} broadened this class of independence assumptions, exploring Markov random fields and Bayesian networks, respectively. A key assumption of \citet{gordon2023causal} and $k$-MixProd is that the Bayesian DAG structure within each component of the mixture is known. This paper will develop an algorithm to recover this structure, thereby implicitly providing conditions for the identifiability of Bayesian DAG mixture models with unknown structure.

\paragraph{Latent Variable Models.}
A related and complementary problem in causal discovery is learning a causal graph over many latent variables from their observable children. Many classic approaches to this problem use ``rank constraints,'' which are rank deficiencies on covariance matrices between sets of variables \citep{silva2006learning, anandkumar2013learning, xie2020generalized}. This has been extended to hierarchical latent variable models \citep{huang2022latent}. Structure-based approaches have also been used to discover the presence and cardinality of hidden variables \citep{elidan_discovering_2000, elidan_learning_2005}, though these target a different setting from ours. Causal representation learning has also used rank deficiencies \citep{squires2023linear}. In fact, a number of very recent works have employed tests of rank to learn structures in latent variables under a ``purity'' assumption, in which there are no dependencies between the observed random variables \citep{chen_learning_2024, kong_learning_2024,lee_theoretical_2026}---equivalently, the mutual-independence assumption of $k$-MixProd, in which the observed variables are independent given the latent source. The hypothesis test that we develop may be useful for all of these problems. In addition, the algorithmic tools we develop to handle dependencies among observed variables may help move these problems beyond the assumptions of ``purity.''

\paragraph{Network-level Differences.} Causal discovery becomes more difficult when allowing for network-level differences within the mixture components, e.g., \citet{saeed2020causal} and \citet{varici2024separability} as well as \citet{strobl2023causal} for mixtures of cyclic DAGs. We focus on a shared graph structure, allowing for significantly fewer parametric assumptions.

\section{PRELIMINARIES}
\paragraph{Notation.}
We will use the capital Latin alphabet to denote random variables, which are vertices on our DAG. When referring to sets of these variables, we will use bolded font, e.g. $\vec{X} = \{X_1,  X_2, \ldots\}$. To refer to components of a graph, we will use the following operators:
$\Pa(V), \Ch(V)$ will refer to the parents and children of $V$. $\An(V), \De(V)$ will refer to the ancestors and descendants of $V$.  $\An(V) \cup \{V\}$ and $\De(V) \cup \{V\}$ are denoted using $\iAn(V), \iDe(V)$ respectively. $\Mb(V) \coloneqq \Pa(V) \cup \Ch(V) \cup \Pa(\Ch(V)) \setminus \{V\}$ will refer to the Markov boundary (i.e. the unique minimal Markov blanket) of $V$ \citep{pearl2009causality}. $\Nb_\ell(V)$ refers to the distance $\ell$ neighborhood of $V$. For all of these sets, we will use a bar to denote the complement, e.g., $\overline{\vec{X}} = \vec{V} \setminus \vec{X}$.

As these operators act on graphs,  they can specify the graph structure being used in the superscript, e.g. $\Pa^\G(V)$.  Unless otherwise specified,  operators should be assumed to apply to the observed subgraph $\G$ and not $\G'$, i.e., $U \not \in \Pa^\G(V)$. We will also occasionally write tuples to indicate the intersection of the sets for two vertices, e.g.  $\Ch(V, W) = \Ch(V) \cap \Ch(W)$. Finally, these operators can also act on sets to indicate the union of the operation, e.g.
\begin{equation}
\Pa(\vec{X}) = \bigcup_{X \in \vec{X}} \Pa(X) \setminus \vec{X}.
\end{equation}
We make use of lowercase letters to denote assignments, for example, $x$ will denote the assignment $X = x$. This can also be applied to operators,  e.g. $\mb(V)$ denotes an assignment to $\Mb(V)$. Assignments can be specified by a set of assignments in the subscript, e.g. $\mb_{\vec{c}}(V)$ obtains assignments for $\Mb(V) \subseteq \vec{C}$ from the assignments of $\vec{c}$ to $\vec{C}$. $\nnrank(\mat{M})$ denotes the nonnegative rank of matrix $\mat{M}$.

\paragraph{Constraint-Based Causal Discovery.}
When two variables are found to be conditionally independent given another set, e.g. $A \indep B \given \vec{C}$, we say that $\vec{C}$ is a ``separating set'' for $A,B$, which acts as a witness for the absence of an edge between them. This critical observation allows the discovery of a ``skeleton,'' i.e., a causal graph containing all causal adjacencies with no orientations. In addition, separating sets also allow in-degree $>2$ vertices with non-adjacent parents to have their parental edges oriented. These are called v-structures, unshielded colliders, or immoralities.

These partial DAG structures become ``completed partial DAGs,'' or CP-DAGs \citep{andersson1997characterization}, by further orienting edges according to the ``Meek rules'' \citep{meek95} that enforce acyclicity and no additional unshielded colliders. CP-DAGs represent a set of possible DAG structures that agree with the observed conditional independence, which is called a Markov equivalence class (MEC).

\paragraph{Outline of Algorithm.}
Our approach is based on the PC-algorithm \citep{spirtes2000causation}, which works in two phases. Phase I will begin with a complete graph and remove edges between variables when we find evidence of non-adjacency (using rank constraints). This phase will only remove edges between groupings of variables, so its termination will not guarantee that we have discovered all possible non-adjacencies. Instead, a provably small subset of the graph will have \emph{false positive} edges.

In Phase II, we will use the structure we have uncovered to induce instances of $k$-mixtures of products, a famous and long-studied problem \citep{gordon2021source, gordon2024identification, allman2009identifiability}, which we will abbreviate $k$-MixProd. An oracle solver for this problem will then identify the joint probability distribution between subsets of $\vec{V}$ and the latent class $U$.  Access to this joint probability distribution allows the rest of the structure to be resolved using conditional independence tests on distributions that are conditioned on $U$.

Phase III mirrors the last phase of the PC algorithm: identifying immoralities using non-adjacencies and separating sets, and then propagating orientations according to Meek rules. This phase is identical to the PC algorithm, so its details are omitted from this manuscript.

\section{RANK TESTS} \label{sec:rank tests}
This section will introduce ``rank tests'' which will serve as a replacement for conditional independence tests as a test for d-separation or d-connectedness. These serve as a generalization of rank-deficiencies of covariance matrices noted in \citet{spirtes2000causation}. The use of rank was also mentioned in \citet{anandkumar2012method}, but not formalized. Rank constraints induced by hidden variables of restricted cardinality have also been studied by \citet{zjawin_restricted_2021}.

\subsection{$k$-Mixture Independence}
To determine non-adjacency, we will leverage the signal $U$ leaves on the marginal probability distributions of variables that are independent conditional on $U$.  First, we interpret the marginal probability distribution as a matrix.
\begin{definition} \label{def: prob mat}
	Given two discrete variables $X, Y \in \vec{V}$ with $\abs{X} = n$ and $\abs{Y} = m$, define the ``probability matrix''  $\mat{M}[X, Y] \in [0, 1]^{n \times m}$ to be
\begin{equation}
\mat{M}[X, Y]_{x, y} := \Pr(x, y),
\end{equation}
where $x \in \{1, \ldots, n\}$ and $y \in \{1, \ldots, m\}$. Similarly, for $\vec{C} \subseteq \vec{V}$, define
\begin{equation}
\mat{M}[X, Y \given \vec{c}]_{x, y} := \Pr(x, y \given \vec{c}).
\end{equation}
\end{definition}
These matrices have ranks that give valuable information about the graphical structure under latent class confounding. Our algorithm will use the following fact.
\begin{lemma}[Rank Test]\label{lem:rank_test}
For $V_i, V_j$ with cardinality $>k$ confounded by $U$ in $\G'$, $V_i \indep^{\G}_d V_j \given \vec{C}$ if and only if (generically) $\nnrank(\mat{M}[V_i, V_j \given \vec{C}]) \leq k$.
\end{lemma}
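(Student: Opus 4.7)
The plan is to derive this biconditional directly from Lemma~\ref{lem:nnrank_independence} and Lemma~\ref{lem:nnrank_dependence}, which already supply the two implications; essentially nothing new is needed beyond taking the contrapositive and handling the quantification over conditioning assignments. First, I would read the statement as asserting the rank bound uniformly over assignments $\vec{c}$ to $\vec{C}$, since the upstream definitions of $\mat{M}[X,Y\mid \vec{c}]$ are given pointwise in $\vec{c}$. Making this explicit, the claim becomes: $V_i \indep^{\G}_d V_j \given \vec{C}$ iff (generically) $\nnrank(\mat{M}[V_i,V_j \given \vec{c}])\le k$ for every assignment $\vec{c}$.

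Forward direction: assume $V_i \indep^{\G}_d V_j \given \vec{C}$. Since d-separation given a set implies d-separation given every assignment of that set, Lemma~\ref{lem:nnrank_independence} applies at each $\vec{c}$ and yields $\nnrank(\mat{M}[V_i,V_j\given \vec{c}])\le k$. This direction needs no genericity hypothesis.

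Reverse direction (by contrapositive): assume $V_i \not\indep^{\G}_d V_j \given \vec{C}$. Then there exists at least one assignment $\vec{c}^\star$ at which $V_i$ and $V_j$ are d-connected given $\vec{C}=\vec{c}^\star$. Since $\abs{V_i},\abs{V_j}>k$, Lemma~\ref{lem:nnrank_dependence} applies at $\vec{c}^\star$ and gives $\nnrank(\mat{M}[V_i,V_j\given \vec{c}^\star])>k$ outside a Lebesgue-measure-zero set of model parameters. This contradicts the uniform rank bound, so generically the rank condition fails, finishing the contrapositive.

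The only mildly subtle point is packaging the genericity across all finitely many assignments $\vec{c}$ to $\vec{C}$: for each fixed $\vec{c}$ the exceptional parameter set from Lemma~\ref{lem:nnrank_dependence} has measure zero, and since $\vec{C}$ is finite-valued, the finite union of these exceptional sets is still measure zero. So the biconditional holds on the complement of a measure-zero subset of parameter space, which is exactly what ``generically'' means in the statement. I do not anticipate a real obstacle here — the work was already done in establishing Lemmas~\ref{lem:nnrank_independence} and~\ref{lem:nnrank_dependence}, and this lemma is their clean packaging into a usable test.
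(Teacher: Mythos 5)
Your proposal is correct and matches the paper exactly: the paper offers no separate proof of Lemma~\ref{lem:rank_test}, treating it as the immediate combination of Lemma~\ref{lem:nnrank_independence} (forward direction) and Lemma~\ref{lem:nnrank_dependence} (reverse direction, via contrapositive), which is precisely your argument. Your extra care in taking a finite union of measure-zero exceptional sets over the assignments $\vec{c}$ is a sound (and slightly more explicit) packaging of the genericity claim than the paper bothers to write down.
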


The cardinality requirement $\abs{V_i}, \abs{V_j} > k$ is needed only to ensure that d-connectedness forces $\nnrank > k$ (the reverse direction, via Lemma~\ref{lem:nnrank_dependence}); the forward implication, that d-separation yields $\nnrank \leq k$, holds for any cardinality.

To prove this Lemma we must show both directions. First, we observe that we can decompose $\mat{M}[X, Y \given \vec{c}]$ as follows:
\begin{equation} \label{eq: decompose M}
\mat{M}[X, Y \given \vec c] = \sum_{u} \Pr(u) \mat{M}[X, Y \given \vec{c}, u].
\end{equation}
$X \indep Y \given \vec{C}, U$, so $\mat{M}[X, Y \given \vec{c}, u]$ can be written as the outer product of two vectors describing the probabilities of each variable.  Therefore, we conclude that $\nnrank(\mat{M}[X, Y \given \vec{c}, u]) = 1$. If $U \in [k]$, then summing $k$ such terms gives $\nnrank(\mat{M}[X, Y \given \vec{c}]) \leq k$.

Having shown that d-separation in $\G$ upper bounds the rank of probability matrices, we now seek a lower bound on the rank in the case of d-connectedness.  This will require a ``faithfulness-like'' assumption that the $\mat{M}[X, Y \given \vec{c}, u]$ terms in Equation~\ref{eq: decompose M} are linearly independent, resulting in an overall $\nnrank{\mat{M}(X, Y \given \vec{c})} > k$ when $X$ and $Y$ are not independent conditioned on $U$. Lemma~\ref{lem:nnrank_dependence} shows that this condition holds generically.

\begin{lemma}\label{lem:nnrank_dependence}
Consider a mixture of Bayesian network distributions, each of which is faithful to $\G$.  If $X \not\indep^{\G}_d Y \given \vec{C}$ and $\abs{X}=n$,  $\abs{Y} = m$ with $n,m > k$, then for all $\vec{c}$, $\nnrank(\mat{M}[X, Y \given \vec{c}]) > k$ with full Lebesgue measure.
\end{lemma}
The proof of Lemma~\ref{lem:nnrank_dependence} (see Appendix~\ref{apx: deferred proofs}) involves applying faithfulness to each summand of Equation~\ref{eq: decompose M}.
To make these rank tests practical, Appendix~\ref{sec: hyp test} develops a hypothesis test for the rank of a noisy matrix of estimated probabilities, which we later test in Section~\ref{sec: experiments}.

\section{ALGORITHM}
We will now outline the first two phases of our algorithm, leaving out the final phase of orienting edges with respect to Meek's rules. The first phase involves agglomerating sets of variables and applying rank tests. We then analyze the output $\G_1$, which has removed \emph{most but not all} of the missing edges. We define FP edges as these ``leftovers'' and show that they are contained in a bounded-size subset. This fact allows us to resolve all of the non-adjacencies in $\G$ by setting up instances of $k$-MixProd in Phase II.
\subsection{Phase I: Agglomerated Rank Tests}
Lemma~\ref{lem:rank_test} allows for a simple generalization of the PC algorithm provided that our probability matrix $\mat{M}[X, Y]$ is at least $k+1$ by $k+1$.  Unfortunately, categorical variables ranging over smaller alphabets (such as the binary alphabets addressed by this paper) do not contain sufficient information to detect non-adjacency in cases of larger $k$. We resolve this problem by the agglomeration of $\lg(k+1)$-cardinality sets of small-alphabet (binary) variables into supervariables of larger carinality (i.e., $2^{\lg(k+1)} = k + 1$).

\begin{definition}
Consider DAG $\G=(\vec{V}, \vec{E})$, $V_i, V_j \in \vec{V}$, and sets $\vec{S}_i, \vec{S}_j \subseteq \vec{V} \setminus \{V_i, V_j\}$. We call the ordered pair $(\vec{S}^+_i, \vec{S}^+_j) = (\vec{S}_i \cup \{V_i\},\vec{S}_j \cup \{V_j\})$ an \textbf{independence preserving agglomeration (IPA)} of $(V_i, V_j)$ if, for some \textbf{IPA conditioning set} $\vec{C} \subset \vec{V}$, \[\vec{S}^+_i\indep^{\G}_d \vec{S}^+_j \given \vec{C}.\]
\end{definition}
The creation of supervariables allows us to use conditional rank tests in place of conditional independence tests. This leads to a modified version of the PC algorithm that searches over pairs of supervariable agglomerations instead of pairs of vertices, given in Algorithm~\ref{alg:phase1}.

\begin{algorithm} 
    \caption{Phase I}\label{alg:phase1}
    \DontPrintSemicolon
    \KwIn{The marginal probability distribution $\Pr(\vec{V})$, marginalized over $U$.}
 	\KwOut{An undirected graph $\G_1 = (\vec{V}, \vec{E}_1)$ and a separating set $\vec{C}_{ij}$ for each detected non-adjacency.}
 	
       	Begin with a complete undirected graph $\G_1=(\vec{V}, \vec{V} \times \vec{V})$ and $d_{\max} \gets \abs{\vec{V}} - 1$.
       	
       	\For{$\ell=0$ to $\ell=d_{\max}$}{
       		\For{$\vec{C} \subset \vec{V}$ and $\abs{\vec{C}} = \ell$}{
       			\For{ $\vec{S}, \vec{S}' \subseteq \vec{V} \setminus \vec{C}$, with $\abs{\vec{S}} = \abs{\vec{S}'} = \lceil \lg(k) \rceil + 1$}{
       				\If{arbitrary assignment $\vec{c}$ has $\nnrank(\mat{M}[\vec{S}, \vec{S}' \given \vec{c}])\leq k$}{
       					Remove edges between $\vec{S}$ and $\vec{S}'$ in $\G_1$. 
   
						$\vec{C}_{i, j} \gets \vec{C}$ for each $V_i \in \vec{S}, V_j \in \vec{S}'$
       					
       					Update $\dmax$ to the max degree of $\G_1$.
       				}
       			}
       		}
       	}
       	
\end{algorithm}

\begin{lemma}\label{lem: runtime phase1}
 Algorithm~\ref{alg:phase1} utilizes $\abs{\vec V}^{\mathcal{O}(\Delta^2 \log(k))}$ non-negative rank tests.
\end{lemma}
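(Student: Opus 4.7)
My plan is a direct count of Algorithm~\ref{alg:phase1}'s three nested loops. Write $n = \abs{\vec V}$ and $s = \lceil \log k \rceil + 1$ for the super-variable size. At iteration $\ell$ of the outer loop, the middle loop enumerates conditioning sets $\vec C \subset \vec V$ with $\abs{\vec C} = \ell$, of which there are at most $\binom{n}{\ell} \le n^{\ell}$; for each such $\vec C$, the inner loop iterates over ordered pairs $(\vec S, \vec S')$ of $s$-element subsets of $\vec V \setminus \vec C$, at most $\binom{n}{s}^2 \le n^{2s} = n^{O(\log k)}$ of them; and for each triple a single rank test is performed on an arbitrary assignment $\vec c$. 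So level $\ell$ contributes at most $n^{\ell + O(\log k)}$ rank tests.

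Next I would bound the effective range of the outer loop, namely the value of $d_{\max}$ reached by $\G_1$ during the run. The outer loop terminates once $\ell$ exceeds $d_{\max}$, so it suffices to argue that $d_{\max} = O(\Delta^2 \log k)$ by the time no further edges can be removed. The intuition is that any separating set $\vec C$ witnessing an IPA must contain (a subset of) the combined Markov boundary of one of the two super-variables, and a super-variable of size $s$ has Markov boundary in $\G$ of size at most $s \cdot O(\Delta^2) = O(\Delta^2 \log k)$, since each constituent vertex contributes at most $\Delta$ parents, $\Delta$ children, and $\Delta \cdot \Delta$ co-parents. Hence once $\ell$ exceeds this threshold, no further rank test can succeed in removing an edge, and $d_{\max}$ stabilizes at a value of the same order.

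Summing the per-level counts,
\begin{equation}
\sum_{\ell = 0}^{O(\Delta^2 \log k)} n^{\ell + O(\log k)} \le n^{O(\Delta^2 \log k)},
\end{equation}
which is the claimed bound. The main obstacle is making the $d_{\max}$ bound rigorous: strictly, one needs a structural guarantee that edges surviving in $\G_1$ are confined to bounded neighborhoods of $\G$, and this depends on how the coarsened rank test interacts with d-separation in $\G$. I would package this as a separate structural claim on the residual edges of $\G_1$ and lean on the subsequent false-positive-edge analysis the paper sets up for Phase II, rather than re-derive it here.
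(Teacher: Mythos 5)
Your proposal is correct and takes essentially the same route as the paper: the paper likewise multiplies the count of conditioning sets of size up to $\alpha := (\lceil \lg(k) \rceil + 1)\Delta^2$ (the separating-set size bound of Lemma~\ref{lem: delta2 sepset size bound}, whose proof is exactly your Markov-boundary/moral-graph degree count) by the $\abs{\vec V}^{\mathcal{O}(\log(k))}$ choices of supervariable pairs. The only difference is your explicit worry about when $d_{\max}$ stabilizes, which the paper glosses over, implicitly leaning on the same FP-edge structure (Lemma~\ref{lem: all fp in H} and Observation~\ref{obs: size of h and degree}) that you propose to defer to.
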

\begin{proof}
Lemma~\ref{lem: delta2 sepset size bound} (see Appendix~\ref{apx: dsep}) tells us that the maximum size of a separating set is $\alpha := (\lceil \lg(k) \rceil + 1) \Delta^2$, so we need to check $\binom{\abs{\vec{V}}}{\alpha} + \binom{\abs{\vec{V}}}{\alpha -1} + \ldots + \binom{\abs{\vec{V}}}{1}$ possible separating sets, which is $\abs{\vec V}^{\mathcal{O}(\Delta^2 \log(k))}$.  We must iterate over all possible supervariables for each separating sets,  which is upper bounded by $\binom{\abs{\vec{V}}}{2( \lceil \lg(k) \rceil + 1)}$, which is $\abs{\vec V}^{\mathcal{O}(\log(k))}$. 
\end{proof}
The exponent of $\mathcal{O}(\Delta^2 \log(k))$ in Lemma~\ref{lem: runtime phase1} can be replaced with $\abs{\vec{V}}$, a trivial maximum size of a separating set.
\subsubsection{FP Edges}
Phase I of our algorithm removes an edge between two non-adjacent variables through a rank test so long as there exists an IPA for the non-adjacency.  Not all non-adjacencies will contain an IPA, so the adjacency graph $\G_1$ contains a \emph{superset} of the true adjacencies.
\begin{definition} $\vec{E}_1 \setminus \vec{E}$ are \textbf{false positive (FP)} edges.
\end{definition}

To see why FP edges exist, it helps to isolate the vertices that lie below both endpoints.
\begin{definition}
Let $\vec{D}_{ij} := \De(V_i, V_j)$ be the common descendants of $V_i$ and $V_j$, and let $\vec{A}_{ij} := \vec{V} \setminus \vec{D}_{ij}$ be the remaining vertices (which include $V_i$ and $V_j$).
\end{definition}
Conditioning on a common child of $V_i$ and $V_j$ (any descendant of one) opens the collider $V_i \rightarrow \cdot \leftarrow V_j$, so such a vertex can never belong to a separating set. It cannot be agglomerated into $\vec{S}^+_i$ or $\vec{S}^+_j$ either: separating a descendant of a common child from the opposite endpoint would require conditioning on an ancestor of that ``collider descendant,'' which would open up the collider path between $V_i$ and $V_j$. Consequently, if too many vertices are common children or a descendant of one, no IPA exists, and an FP edge remains after Phase I. Figure~\ref{fig:example_forbidden} shows an extreme case, in which $\vec{A}_{ij} = \{V_i, V_j\}$ leaves nothing to agglomerate.

\begin{figure}[h]
    \centering
    \scalebox{.57}{
    \begin {tikzpicture}[-latex ,auto ,node distance =1.5 cm and 1.5 cm ,on grid , ultra thick, state/.style ={ circle, draw, minimum width =.5 cm}, cstate/.style ={ circle, draw, minimum width =.5 cm, ultra thick}]
    			\filldraw[color=red, fill=red!5, very thick](-2.4,-4) rectangle (4, -.8)
            node[above left] {$\vec{D}_{ij}$};
            \node[state] (X) {$V_i$};
            \node[state] (Y) [right  =of X] {$V_j$};
            \node[state,] (V1) [below = of X]{$V_1$};
             \node[state,] (V2) [below = of Y]{$V_2$};
            \node[rectangle, draw, minimum width =1.5 cm, minimum height = 1.5cm,] (Fb1) [below left = 2 cm of V1] {$\De(V_1)$};
             \node[rectangle, draw, minimum width =1.5 cm, minimum height = 1.5cm,] (Fb2) [below right = 2 cm of V2] {$\De(V_2)$};
            \path (X) edge (V1) (Y) edge (V1) (V1) edge (Fb1);
            \path (X) edge (V2) (Y) edge (V2)(V2) edge (Fb2);
    \end{tikzpicture}
    }
    \caption{An FP edge remains between $V_i$ and $V_j$ after Phase I when every other vertex is a common descendant $\vec{D}_{ij}$ (here, a common child or one of its descendants). Although $V_i$ and $V_j$ are d-separated by $\vec{C} = \emptyset$, no IPA can be formed: the only vertices left to agglomerate lie in $\vec{D}_{ij}$, and conditioning on any of them would open a collider. Equivalently, $\vec{A}_{ij} = \{V_i, V_j\}$ is too small. The population variable $U$ is omitted to avoid clutter.} \label{fig:example_forbidden} 
  \end{figure}

This illustrates that FP edges can occur for pairs of vertices with too many common descendants, leaving too few vertices in $\vec{A}_{ij}$ to form an IPA (shown in Figure~\ref{fig:example_forbidden}).

\subsubsection{Using non-descendants to form IPAs}
It turns out that the majority of pairs of vertices do not have many common descendants and can be agglomerated. To show this, we take $\vec{S}_i, \vec{S}_j \subseteq \vec{A}_{ij}$. The following lemma shows this restriction is \emph{safe}: whenever the two super-variables can be separated at all, they can be separated using a conditioning set drawn entirely from $\vec{A}_{ij}$. Confining the conditions of existance to $\vec{A}_{ij}$ therefore never costs us a witness of non-adjacency, and---since $\vec{A}_{ij}$ excludes every common descendant---such a witness never conditions on an open collider.

\begin{lemma} \label{lem: Aij safe}
Let $V_i, V_j$ be non-adjacent and let $\vec{S}_i, \vec{S}_j \subseteq \vec{A}_{ij}$. If $\vec{S}^+_i$ and $\vec{S}^+_j$ are d-separated by any set, then they are d-separated by some $\vec{C} \subseteq \vec{A}_{ij}$; in particular, $\vec{C}$ contains no common descendant of $V_i, V_j$, so $(\vec{S}^+_i, \vec{S}^+_j)$ is an IPA.
\end{lemma}
\begin{proof}
An ancestor of a non-common-descendant is again a non-common-descendant, so $\vec{A}_{ij}$ is ancestrally closed; hence $\iAn(\vec{S}^+_i \cup \vec{S}^+_j) \subseteq \vec{A}_{ij}$. If $\vec{S}^+_i$ and $\vec{S}^+_j$ are d-separated by some set, they are separated in the moral graph $(\G[\iAn(\vec{S}^+_i \cup \vec{S}^+_j)])^{(m)}$ by a set $\vec{C} \subseteq \iAn(\vec{S}^+_i \cup \vec{S}^+_j)$; by Lemma~\ref{lem: moral graph dsep} this $\vec{C}$ is a separating set in $\G$. Since $\vec{C} \subseteq \iAn(\vec{S}^+_i \cup \vec{S}^+_j) \subseteq \vec{A}_{ij} = \vec{V} \setminus \vec{D}_{ij}$, it contains no common descendant of $V_i, V_j$.
\end{proof}

Lemma~\ref{lem: Aij safe} shows that working inside $\vec{A}_{ij}$ never removes our ability to separate, so the sole remaining obstruction to forming an IPA is having too few vertices in $\vec{A}_{ij}$. Lemma~\ref{lem: sepset exists if we have enough vertices} quantifies how many are needed, and since $\vec{A}_{ij}$ contains the non-descendants of \emph{both} $V_i$ and $V_j$, it is too small only when both endpoints have few non-descendants---that is, when both lie in the ``early vertices'' $\vec{H}$ introduced next.

\begin{lemma} \label{lem: sepset exists if we have enough vertices}
An IPA $\vec{S}_i^+, \vec{S}_j^+$ for $V_i, V_j \in \vec V$ exists so long as $\abs{\vec{A}_{ij}} \geq (2 + \Delta^2)\lceil \lg(k + 1) \rceil - 2$.
\end{lemma}
To utilize Lemma~\ref{lem: sepset exists if we have enough vertices}, we define a special set of vertices, which has an upper-bounded cardinality.
\begin{definition} \label{def: early vertices}
 We define the early vertices,
\[
\vec{H} := \{V \in \vec{V} \text{ s.t. } \abs{
\overline{\De}(V)} < (2 + \Delta^2)\lceil \lg(k+1) \rceil  - 2\}.
\]
\end{definition}
\begin{lemma}\label{obs: size of h and degree}
$\abs{\vec{H}} \leq (2 + \Delta^2)\lceil \lg(k+1) \rceil  - 2$.
\end{lemma}

This definition is carefully crafted so that any pair of vertices with at least one vertex out of $\vec{H}$ has enough non-descendants to augment both vertices to an agglomeration set of at least $\lceil \lg(k + 2) \rceil$ variables. Such a witness to the non-adjacency of these pairs means that they do not have FP edges.
\begin{lemma} \label{lem: all fp in H}
After Phase I (Algorithm~\ref{alg:phase1}),  all false positive edges lie within the early vertices, i.e., $\vec E_1 \setminus \vec E \subseteq \vec H \times \vec H$.
\end{lemma}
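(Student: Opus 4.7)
I will prove the contrapositive: for a non-adjacent pair $(V_i, V_j)$, if at least one of $\{V_i, V_j\}$ lies outside $\vec{H}$, then Phase~I finds an IPA for $(V_i, V_j)$ and removes the edge. By the symmetry of the unordered edge I assume WLOG $V_j \notin \vec{H}$, which gives $|\overline{\De}(V_j)| \geq (2 + \Delta^2)(\lceil \lg k \rceil + 1)$. A secondary WLOG reduction disposes of the topological relationship between $V_i$ and $V_j$: if $V_i \in \De(V_j)$, then transitivity yields $\De(V_i) \subseteq \De(V_j)$, hence $|\overline{\De}(V_i)| \geq |\overline{\De}(V_j)|$, so $V_i$ also lies outside $\vec{H}$, and DAG acyclicity gives $V_j \in \overline{\De}(V_i)$, so swapping the labels $V_i \leftrightarrow V_j$ reduces us to the case $V_i \in \overline{\De}(V_j)$.

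The construction will take place entirely inside $\overline{\De}(V_j)$. This is safe because
\[
\Fb(V_i, V_j) = \Ch(V_i, V_j) \cup \De(\Ch(V_i, V_j)) \subseteq \De(V_j),
\]
so any vertex drawn from $\overline{\De}(V_j)$ automatically satisfies the exclusion of Lemma~\ref{lem: IPAs cant use IMD}, and any conditioning set drawn from $\overline{\De}(V_j)$ satisfies Observation~\ref{obs:binding family is forbidden}. Inside this pool I will pick $\vec{S}_i^+ \ni V_i$ and $\vec{S}_j^+ \ni V_j$, each of size $\lceil \lg k \rceil + 1$, together with a conditioning set $\vec{C}$, so that $\vec{S}_i^+ \indep^{\G}_d \vec{S}_j^+ \mid \vec{C}$.

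The selection is local and greedy. Each vertex already placed into one of the supervariables threatens the pairwise d-separation only through the $\mathcal{O}(\Delta^2)$ vertices in its Markov-boundary-type neighborhood, which must either be withheld from the opposite supervariable or absorbed into $\vec{C}$. Over the $2(\lceil \lg k \rceil + 1)$ supervariable vertices, the accumulated excluded/absorbed set has size at most $\Delta^2(\lceil \lg k \rceil + 1)$. The threshold in the definition of $\vec{H}$ is tuned so that this exclusion plus the $2(\lceil \lg k \rceil + 1)$ chosen vertices fit inside $|\overline{\De}(V_j)|$, since $(2 + \Delta^2)(\lceil \lg k \rceil + 1) - 2(\lceil \lg k \rceil + 1) = \Delta^2(\lceil \lg k \rceil + 1)$. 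Thus whenever $V_j \notin \vec{H}$ there remains strictly positive slack to complete the construction.

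The main obstacle is the d-separation bookkeeping: verifying that the greedy choice truly yields pairwise $X \indep^{\G}_d Y \mid \vec{C}$ for every $X \in \vec{S}_i^+, Y \in \vec{S}_j^+$, i.e., that no path between $\vec{S}_i^+$ and $\vec{S}_j^+$ in the moralized ancestral graph of $\G$ evades $\vec{C}$, and that every such potential path can be charged to the $\Delta^2$-sized neighborhood of an already-placed supervariable vertex. This charging argument, which leans on the degree bound $\Delta$ and on the sub-DAG structure inherited by $\overline{\De}(V_j)$, is what converts the greedy exclusion count into the clean $\Delta^2(\lceil \lg k \rceil + 1)$ budget baked into the definition of $\vec{H}$.
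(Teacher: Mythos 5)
Your high-level skeleton matches the paper's: prove the contrapositive, work inside a pool of non-descendants (which automatically avoids $\Fb(V_i, V_j)$ and is ancestrally closed), and count against the threshold in the definition of $\vec{H}$. Your WLOG reduction to $V_i \in \overline{\De}(V_j)$ is correct, though the paper sidesteps that case analysis by using the larger pool $\vec{A}_{ij} = \overline{\De}(V_i) \cup \overline{\De}(V_j)$, which contains both endpoints unconditionally. The genuine gap is that the heart of the lemma is missing: you never exhibit a conditioning set $\vec{C}$ and prove the d-separation $\vec{S}_i^+ \indep_d \vec{S}_j^+ \given \vec{C}$. You name this ``the main obstacle'' and appeal to a ``charging argument,'' but that argument is precisely what the proof must supply, and it is nowhere carried out.

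Worse, the budget arithmetic you sketch does not close as stated. You charge each placed supervariable vertex, on \emph{both} sides, a $\Delta^2$-sized neighborhood to be ``withheld from the opposite supervariable or absorbed into $\vec{C}$''; over $2(\lceil \lg k \rceil + 1)$ placed vertices that totals $2\Delta^2(\lceil \lg k \rceil + 1)$, which exceeds the slack $\Delta^2(\lceil \lg k \rceil + 1)$ available in $\abs{\overline{\De}(V_j)}$, yet you assert the smaller total without justification. The paper's construction (Lemma~\ref{lem: sepset exists if we have enough vertices}) is asymmetric and thereby avoids the double charge: form $\vec{S}_i^+$ from $V_i$ plus $\lceil \lg k \rceil$ pool vertices, and take $\vec{C} := \Mb^{\G[\vec{A}_{ij}]}(\vec{S}_i^+)$, the Markov boundary of that \emph{one} supervariable in the subgraph induced on the pool. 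This single set, of size at most $\Delta^2(\lceil \lg k \rceil + 1)$, d-separates $\vec{S}_i^+$ from \emph{all} remaining pool vertices simultaneously, so $\vec{S}_j^+$ can be filled with arbitrary leftovers at no additional cost; and because the pool is ancestrally closed, Lemma~\ref{lem: moral graph dsep} lifts this separation from the induced moralized subgraph to $\G$ itself. Without this construction (or an equivalent explicit one, together with the moral-graph lift that makes induced-subgraph separation valid in $\G$), your proposal is a plan for a proof rather than a proof.
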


\subsection{Phase II: Correct FP Edges}
Recall that the marginal probability distribution cannot use independence tests to discover non-adjacency because $U$ confounds all pairs of vertices.  An important observation is that the within-source distribution $\Pr(\vec{V} \given u)$ would not suffer from this limitation because it would allow us to query conditional independence that conditions on the unobserved $U$. 

Phase II will make use of this observation by selecting subsets of variables $\vec{T} \subseteq \vec{V}$ on which to obtain $\Pr(\vec{T} \given u)$ using techniques from discrete mixture models. We will then apply regular conditional independence checks on the recovered $\Pr(\vec{T} \given u)$ to detect FP edges.  We will use a separate $\vec{T}_{ij} \ni V_i, V_j$ agglomeration to verify each edge $(V_i, V_j) \in \vec{E}_1$, though this process can likely be optimized.

We will use a lemma by \cite{allman2009identifiability}, which is a direct consequence of a result by \cite{kruskal1977three} on the uniqueness of decomposing order-3 tensors into rank one components.
\begin{lemma}[\cite{allman2009identifiability}] \label{lemma: kruskal}
Consider the discrete mixture source $U \in \{1, \ldots, k\}$ and discrete variables $X_1, X_2, X_3$ with cardinality $\kappa_1, \kappa_2, \kappa_3$ that are mutually independent given $U$ (i.e., $X_a \indep X_b \given U$ for all $a \neq b$).  The mixture is generically identifiable (with full Lebesgue measure on the parameter space) if
\[\min(\kappa_1, k) + \min(\kappa_2, k) + \min(\kappa_3, k) \geq 2k + 2.\]
\end{lemma}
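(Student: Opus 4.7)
The plan is to reduce the claim to Kruskal's uniqueness theorem for three-way tensor decompositions, which the paper already flags as the underlying tool. First I would identify the joint distribution with a three-way tensor $\mat{T} \in \R^{\kappa_1 \times \kappa_2 \times \kappa_3}$ defined by $\mat{T}_{a,b,c} = \Pr(X_1=a, X_2=b, X_3=c)$. The conditional independence $X_1 \indep X_2 \indep X_3 \given U$ together with the law of total probability gives the canonical polyadic (CP) decomposition
\begin{equation}
\mat{T} = \sum_{u=1}^{k} \Pr(u)\, \vec{p}^{(1)}_u \otimes \vec{p}^{(2)}_u \otimes \vec{p}^{(3)}_u,
\end{equation}
where $\vec{p}^{(i)}_u \in \R^{\kappa_i}$ is the probability vector for $X_i$ conditional on $U=u$. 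Collecting these vectors as columns, let $\mat{M}^{(i)} \in \R^{\kappa_i \times k}$ be the mode-$i$ factor matrix (with $\Pr(u)$ absorbed into, say, the mode-1 columns). Recovering the mixture from $\Pr(X_1, X_2, X_3)$ is then equivalent to recovering the triple $(\mat{M}^{(1)}, \mat{M}^{(2)}, \mat{M}^{(3)})$ up to the harmless column permutation and scaling freedom inherent in any CP decomposition.

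Next I would invoke Kruskal's theorem: if the Kruskal ranks satisfy $k_1 + k_2 + k_3 \geq 2k+2$, where $k_i$ is the largest $r$ such that every $r$ columns of $\mat{M}^{(i)}$ are linearly independent, then the rank-$k$ CP decomposition is unique. Since $\mat{M}^{(i)}$ has $k$ columns lying in $\R^{\kappa_i}$, the trivial upper bound is $k_i \leq \min(\kappa_i, k)$. Hence the hypothesis of the lemma, $\min(\kappa_1,k) + \min(\kappa_2,k) + \min(\kappa_3,k) \geq 2k+2$, is exactly the statement that Kruskal's inequality is satisfied \emph{whenever} each $k_i$ attains its trivial maximum $\min(\kappa_i, k)$.

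The remaining work — the actual genericity content — is to show that the set of parameters where some $k_i$ falls strictly below $\min(\kappa_i, k)$ has Lebesgue measure zero in the natural parameter space (the product of the probability simplex for $\Pr(U)$ with the simplices for each $\Pr(X_i \given U=u)$). For fixed $i$, $k_i < \min(\kappa_i, k)$ iff some $\min(\kappa_i, k)$-subset of columns of $\mat{M}^{(i)}$ is linearly dependent, which is a simultaneous vanishing condition on the maximal minors of that submatrix. Each such minor is a polynomial in the free parameters and is not identically zero (one can exhibit a single parameter choice — e.g., near-canonical-basis stochastic vectors — where it is nonzero), so its zero set is a proper algebraic subvariety of the affine hull of the simplex product and hence Lebesgue-null. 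A finite union over $i$ and over the finitely many column subsets stays Lebesgue-null, so outside a measure-zero set we have $k_i = \min(\kappa_i, k)$ for all $i$, Kruskal's hypothesis holds, and the decomposition is unique.

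The main obstacle I expect is bookkeeping rather than anything conceptually deep: one must check that (i) the column scalings by $\Pr(u)$ absorbed into $\mat{M}^{(1)}$ do not affect Kruskal ranks — true because Kruskal rank is invariant under nonzero column rescaling, and $\Pr(u) > 0$ holds on a full-measure subset — and (ii) each minor polynomial used to cut out the bad locus really is non-trivial on the simplex product, which is verified by an explicit witness assignment. The substantive mathematical content, namely Kruskal's uniqueness theorem itself, is invoked as a black box. Assembled, these pieces give the generic identifiability claim.
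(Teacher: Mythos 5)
Your proposal is correct and matches the derivation the paper itself relies on: the paper does not prove this lemma but cites it from \cite{allman2009identifiability} as a direct consequence of Kruskal's theorem \citep{kruskal1977three}, and your argument (CP decomposition of the joint tensor, Kruskal's condition $k_1+k_2+k_3 \geq 2k+2$, plus the standard non-vanishing-minor argument showing each Kruskal rank generically attains $\min(\kappa_i,k)$) is exactly that standard proof. The bookkeeping points you flag (absorbing $\Pr(u)>0$ into one mode, fixing the scaling ambiguity via stochastic normalization) are handled correctly.
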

This result has been used extensively to uncover model parameters when three observables are known to be conditionally independent on an unknown mixing variable \citep{anandkumar2014tensor}. The conditions for identifiability are therefore quite mild --- Phase I only needs to uncover enough sparsity to d-separate three sufficiently large independent agglomerations, one of which will be $\vec{T}_{ij}$ with the intention of verifying or removing $V_i \rightarrow V_j$.  Conveniently, the constrained nature of our FP edges means that the graph $\G_1$ is sufficiently sparse to allow such constructions.

$\vec{T}_{ij}$ must be designed to include enough information to discover a non-adjacency between $V_i, V_j$. In other words, we need to ensure that, if $V_i$ and $V_j$ are \emph{not} adjacent, then $\vec{T}_{ij}$ contains a separating set $\vec{C} \subset \vec{T}_{ij}$ such that $V_i \indep^{\G}_d V_j \given \vec{C}$.  Ensuring that $\vec{T}_{ij}$ also has the distance-1 neighborhoods of $V_i, V_j$ is enough to satisfy this requirement.

\begin{definition}\label{def: T}
Given vertices $V_i$, $V_j$, let $\vec{T}_{ij}$ be the set containing $V_i, V_j$ and all vertices that are distance $1$ in $\G_1$ from $V_i$ or $V_j$.
\end{definition}

\begin{lemma} \label{lem: tij has separating set}
If vertices $V_i, V_j$ are nonadjacent, the set $\vec{T}_{ij}$ contains a valid separating set $\vec{C} \subseteq \vec{T}_{ij}$ such that  $V_i \dsep V_j \given \vec{C}$.
\end{lemma}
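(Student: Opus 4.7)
The plan is to take $\vec{C}$ to be the parent set of one of $V_i, V_j$ in $\G$. Since $\G$ is acyclic, at least one of $V_i, V_j$ is not a descendant of the other; relabel if necessary so that $V_j \notin \De^{\G}(V_i)$. Combined with the non-adjacency hypothesis, $V_j$ is a non-descendant of $V_i$ which is not itself a parent of $V_i$.

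Next, I would appeal to the local Markov property of DAGs: every vertex is d-separated from all of its non-descendants (outside its parent set) by its parents. Applied here, this gives
\[
V_i \indep_d^{\G} V_j \given \Pa^{\G}(V_i),
\]
so $\vec{C} := \Pa^{\G}(V_i)$ is a valid separating set in $\G$. (By symmetry, $\Pa^{\G}(V_j)$ would work in the opposite case.)

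The main step is to verify that this separating set sits inside $\vec{T}_{ij}$. By construction, the only way Phase~I ever removes an edge is via a rank test succeeding for some supervariable pair $(\vec{S}, \vec{S}')$, which by Lemma~\ref{lem:rank_test} means $\vec{S} \indep_d^{\G} \vec{S}' \given \vec{C}'$; d-separation of these sets forces each pair $V \in \vec{S}$, $V' \in \vec{S}'$ to be non-adjacent in $\G$. Hence Phase~I never deletes a true edge of $\G$, i.e.\ $\vec{E} \subseteq \vec{E}_1$ as undirected edge sets. In particular, every $V \in \Pa^{\G}(V_i)$ shares an edge with $V_i$ in $\G_1$, so $V$ is a distance-$1$ neighbor of $V_i$ in $\G_1$ and therefore belongs to $\vec{T}_{ij}$. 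Combined with the preceding paragraph, $\vec{C} = \Pa^{\G}(V_i) \subseteq \vec{T}_{ij}$ is the desired separating set.

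The real content of the argument is the containment $\Pa^{\G}(V_i) \subseteq \vec{T}_{ij}$, which relies on Phase~I being conservative (soundness of the rank test, already established). I do not anticipate a serious obstacle; the main thing to be careful about is distinguishing d-separation in $\G$ from that in $\G'$, so that the local Markov property is applied to the correct graph.
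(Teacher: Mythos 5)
Your proposal is correct and follows essentially the same route as the paper: the paper's proof simply notes that $\vec{T}_{ij}$ contains $\Pa^{\G}(V_i)$ and $\Pa^{\G}(V_j)$ (since $\vec{E} \subseteq \vec{E}_1$ makes true parents distance-$1$ neighbors in $\G_1$) and invokes Lemma~\ref{lem:parents d-sep}, which is exactly the parent-set separating-set fact you re-derive from the local Markov property. Your additional step verifying that Phase~I never deletes true edges is a correct (and slightly more explicit) justification of what the paper treats as already established when it defines FP edges as $\vec{E}_1 \setminus \vec{E}$.
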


Lemma~\ref{lem: tij has separating set} guarantees that the conditional probability distribution $\Pr(\vec{T}_{ij} \given u)$ has sufficient information to verify or falsify the adjacency of $V_i$ and $V_j$.

The rest of the construction of the $k$-MixProd instances is left to Appendix~\ref{apx: set up kmixprod}. Generally, it involves ensuring that the recovered $\G_1$ from Phase I is sparse enough to $d$-separate all $\vec{T}_{ij}$ from two other supervariables of sufficient cardinality. Lemma~\ref{lem: vertices needed for phase 2} tells us the number of vertices that are needed to set up an instance of $k$-MixProd. Algorithm~\ref{alg:FPCorrection} performs FP-edge correction with the statistics recovered from $k$-MixProd oracles. Theorem~\ref{lem: phase II runtime and vertex requirement} summarizes the results proved in Appendix~\ref{apx: set up kmixprod}.
\begin{theorem}\label{lem: phase II runtime and vertex requirement}
Phase II requires $(2\Delta + \Delta^3) \lceil \lg(k + 1) \rceil + 2(\Delta^2 + \Delta + 1) \lceil \lg(k) \rceil + 2 \Delta^2 + 2\Delta^3$ vertices and solves $k$-MixProd $\mathcal{O}(k \abs{\vec{E}} 2^{\Delta^2})$ times.
\end{theorem}
As with Lemma~\ref{lem: runtime phase1}, the $\Delta^2$ in the runtime of Theorem~\ref{lem: phase II runtime and vertex requirement} comes from the maximum size of a separating set and is therefore trivially bounded by $\abs{\vec{V}}$. A complete worked example walking through all phases on the graph of Figure~\ref{fig: example_setup} is given in Appendix~\ref{apx: worked example}.

\section{EMPIRICAL RESULTS} \label{sec: experiments}
Our theoretical results guarantee success with infinite data.  We now employ empirical tests for our hypothesis-based rank test and investigate the sensitivity of Phase I. Real-world datasets with a known ground truth are limited in this setting. As such, we utilize synthetic data on a broad class of functions and noise, the details of which are deferred to Appendix~\ref{apx: synth data details}. The synthetic approach is argued for in \citet{poinsot_position_2025}. All experiments are run using code provided at \href{https://github.com/honeybijan/causal_discovery_mixtures}{this GitHub link}. Use \texttt{pip install probrank} to install the rank test.

\subsection{Test 1: Rank Hypothesis Test}
This test compares the hypothesis test developed in Section~\ref{sec: hyp test} to a naive thresholding of singular values as in \citet{anandkumar2012learning}. We generate data from two graphs:
\begin{enumerate}
\vspace{-1em}
\item ``Connected'' $\G^{c}$: $V_1 \rightarrow V_2 \rightarrow V_3 \rightarrow V_4$
\item ``Split'' $\G^{s}$: $V_1 \rightarrow V_2$  \hspace{1cm} $V_3 \rightarrow V_4$
\vspace{-1em}
\end{enumerate}

\begin{figure}[h]
\centering
(a)\includegraphics[width=.35\textwidth]{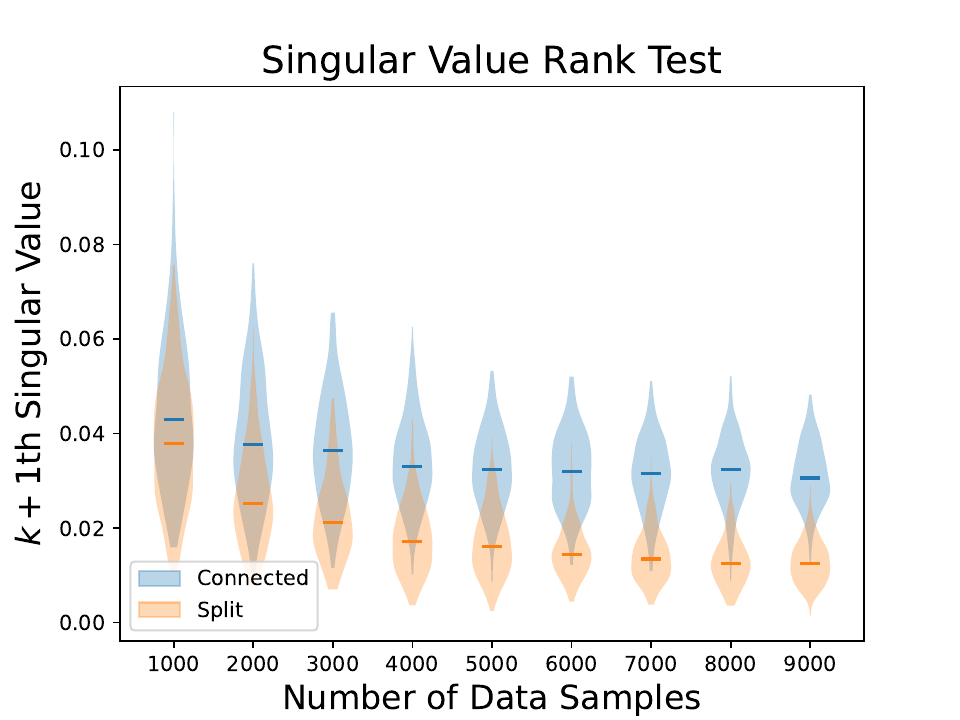}\\
(b) \includegraphics[width=.35\textwidth]{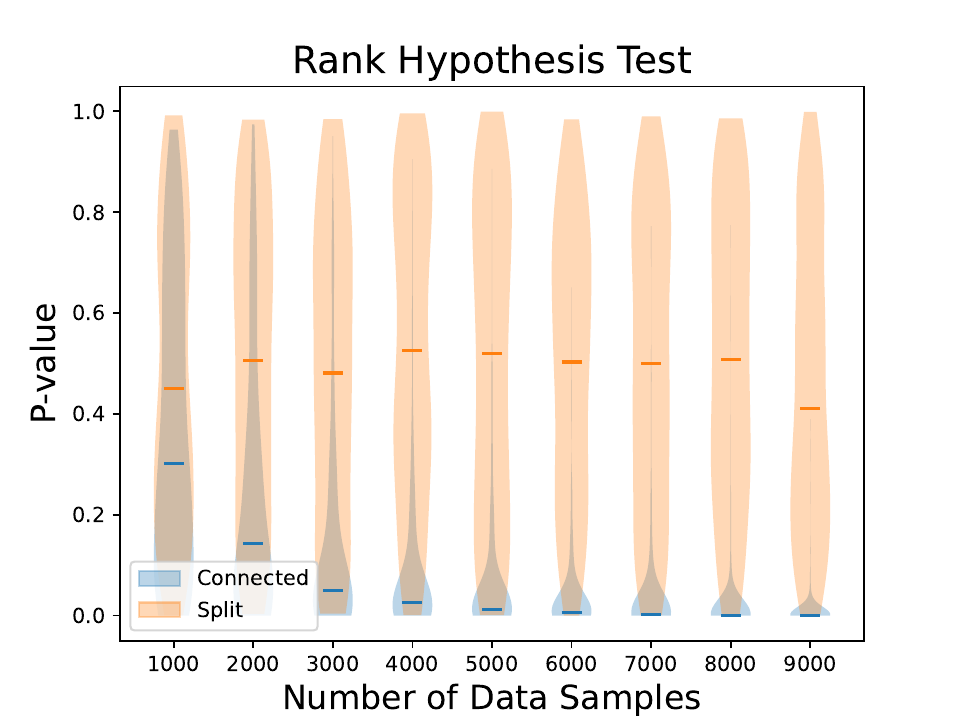}
\caption{The results of Test 1. We compare (a) the singular values and (b) the p-values of our hypothesis test for the connected (blue) and split (orange) models.} \label{fig:test1res}
\end{figure}

If we agglomerate our vertices into $\vec{S}^+_i = \{V_1, V_2\}$ and $\vec{S}^+_j = \{V_3, V_4\}$, then these two DAGs differ in that $\vec{S}^+_i \not \dsep^{\G^{c}} \vec{S}^+_j$ and $\vec{S}^+_i \dsep^{\G^{s}} \vec{S}^+_j$. Distinguishing between these two DAGs is notably difficult because $\vec{S}^+_i$ and $\vec{S}^+_j$ are only ``loosely'' connected by $V_2 \rightarrow V_3$.

We varied the number of samples from these distributions from $1000$ to $9000$ and studied the distributions of p-values and $k+1$th singular values across $200$ runs. The results are reported in Figure~\ref{fig:test1res}, showing that the hypothesis test has a significant difference in p-values beyond $2000$ samples (relative to the difference in the $k+1$th singular values). 
It is worth noting that it is almost impossible to choose a threshold for the $k+1$th singular value ahead of time, whereas hypothesis tests give a meaningful significance.

\subsection{Test 2: Varying Density}
\begin{figure}[h]
\centering
(a)\includegraphics[width=.35\textwidth]{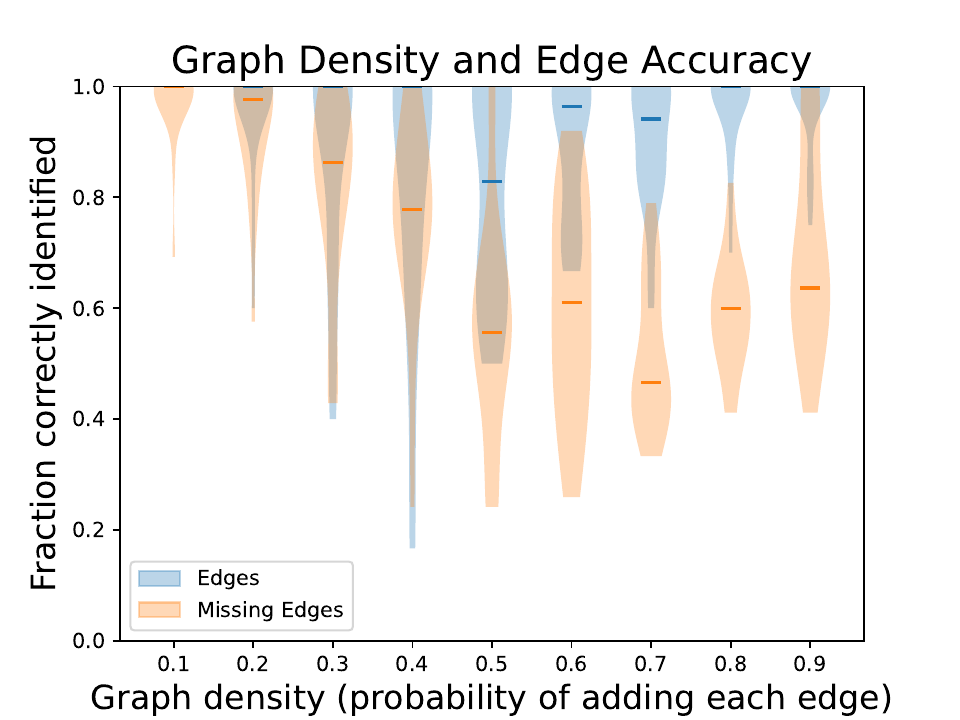}\\
(b)\includegraphics[width=.35\textwidth]{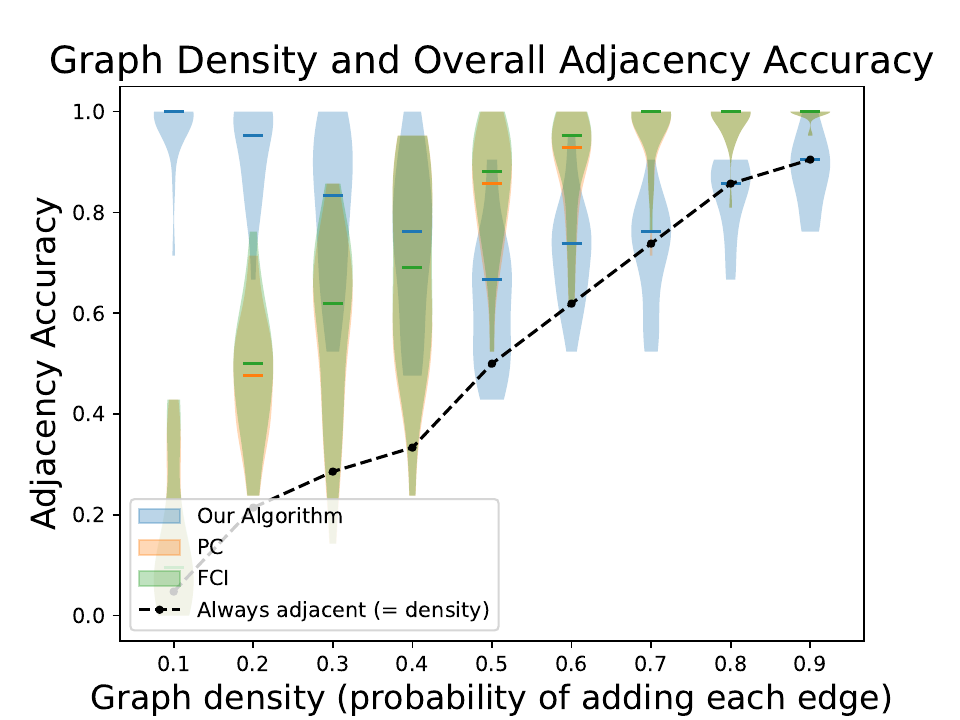}
\caption{The results of Test 2. In (a), blue gives the percentage of correctly identified edges, and orange is the percentage of correctly removed edges. In (b), our method is compared to PC and FCI; the dashed line is the trivial ``always-adjacent'' baseline (equal to the graph density), shown for reference. PC and FCI return graphs of roughly constant moderate density regardless of the truth---near-complete on sparse graphs---so they exceed the baseline substantially only once the ground truth is itself dense. Our method is strongest in the sparse regime covered by our theory.} \label{fig:test2res}
\end{figure}
In our second test, we explore the role of density in accurately detecting graph adjacency.  For this test, we sample random Erdös-Renyi undirected graph structures on $7$ vertices and orient them according to a random permutation of the vertices which determines the topological ordering. We vary the edge-probability from $.1$ to $.9$ in $.1$ increments,  sampling 20 graph structures for each.  Among these graphs, we draw $10,000$ datapoints and study the percentage of correctly recovered edges (significance threshold of $.7$ to account for many tests). Results are given in Figure~\ref{fig:test2res} (a).

Strong performance is shown at low-density graphs, which represent the regime of theoretical study in this paper. As the graph density increases, we fail to preserve edges (lower blue marks) and incorrectly report edges where none exist (lower orange marks). At high densities, our accuracy for detecting true edges returns to higher levels, but at the cost of occasionally adding false positive edges.
Figure~\ref{fig:test2res} (b) compares overall adjacency accuracy against PC and FCI, using the implementation of \cite{zheng2024causal}, with the trivial ``always-adjacent'' baseline (dashed, equal to the realized graph density) shown for reference. Adjacency accuracy must be read relative to this baseline, since on denser ground-truth graphs a larger fraction of pairs are genuinely adjacent and the baseline itself rises with density. Neither PC nor FCI models the latent global confounder $U$, and empirically both return graphs of roughly constant, moderate density---on the order of $14$ to $20$ of the $21$ possible edges---largely insensitive to the true sparsity. On sparse graphs this amounts to near-complete output (about $20$ edges when fewer than $2$ are real at density $0.1$), so their accuracy only marginally exceeds the baseline; their improvement on denser graphs is driven mainly by the coincidence of a fixed dense output with an increasingly dense ground truth, rather than by recovery of sparse structure. Our method is designed to remove exactly this confounder-induced dependence and attains the highest accuracy in the sparse regime, where PC and FCI are effectively at chance relative to the baseline. On denser graphs our accuracy is governed by a p-value threshold that trades removal of spurious edges against retention of true ones. Selecting this threshold is complicated by multiplicity: each candidate pair is subjected to many rank tests---over agglomerations, conditioning sets, and their assignments---so a fixed per-test level does not correspond to a fixed family-wise level. A principled choice, for instance, a correction that scales with the number of tests applied per pair, remains open.
\section{DISCUSSION}
This paper gives the first identifiability result and algorithm for causal DAG discovery under confounding by a mixture or latent class. It does so when the observables are Bernoulli, which contain only one parameter. This scarcity of parameters creates a unique identifiability problem, bringing about the need for ``agglomeration.'' In principle, our approach can be adapted for easier settings, e.g., discrete observables with higher cardinality alphabets or continuous observables with higher moments, \emph{without} the need to agglomerate (thereby making the problem significantly easier). Our result illustrates the relationship between the amount of observed information (given by the number of Bernoulli random variables and their graphical sparsity) and the amount of unobserved information (given by the number of latent classes $k$). Corollary~\ref{thm: ident} only supplies sufficient conditions, and qe leave the question of necesity as an open question.

The majority of our approach relies on the assumption of a known $k$. This restriction is unavoidable in all mixture settings (e.g., $k$-means). In practice, such a $k$ must be determined through trial and error. If $k$ is too large, then the resulting DAG will be very sparse. If $k$ is too small, then the resulting DAG will be very densely connected. Appendix~\ref{apx: additional experiments} repeats our experiments with (incorrectly) larger and smaller $k$, verifying that this property is very sensitive to the choice of $k$. This means that a practitioner could, in principle, tune $k$ until they receive results that are not close to cliques or empty graphs.

Fortunately, $k$ is sometimes known approximately. $k$ can also be estimated under certain circumstances, such as when some observables are \emph{known} to be independent (conditioned on $U$). In this case, a rank test can be used to determine $k$ --- i.e., the rank of the probability matrix between two conditionally independent variables will be rank $k$ under the previously discussed faithfulness assumptions.

We also assumed that the latent classes obeyed the \emph{same} DAG structure. This assumption can be relaxed: so long as there is enough shared sparsity from Phase I, we can detect statistical dependencies within \emph{at least one} graph structure. Phase II can then detect mild class-specific differences in graphical structure. This relaxation will, however, change the number of required vertices in nontrivial ways.

The results in this paper focus on discrete observables. Caution should be exercised when discretizing continuous results, as pointed out by \citet{sun_conditional_2025}. A more general approach could modify the probability matrix (Definition~\ref{def: prob mat}) to a matrix of moments, e.g. $\mat{M}[\mathbf{X}, \mathbf{Y}]_{ij} = E[X_i Y_j]$, utilizing kernel mean embedding \citep{muandet_kernel_2017} to compute conditional expectations.

Finally, it is worth noting that the hypothesis test of rank developed in Appendix~\ref{sec: hyp test} may have uses for causal representation learning (CRL) \citep{squires2023linear} and latent variable models \citep{silva2006learning, anandkumar2013learning, xie2020generalized, huang2022latent}. Additionally, the notion of agglomeration may have implications for the abstraction of coarse-grained concepts within CRL problems.

\begin{contributions}

    B.~Mazaheri conceived the idea, implemented the experiments, and wrote the paper.
    S. Gordon, Y. Rabani, and L. Schulman assisted with writing, editing, and verifying the results.
\end{contributions}

\begin{acknowledgements} 

    B. M. is partially supported by the Advanced Research Concepts (ARC) COMPASS program, sponsored by the Defense Advanced Research Projects Agency (DARPA) under agreement number HR001-25-3-0212.

    L. S. is partially supported by NSF grant CCF-2321079.
\end{acknowledgements}

\bibliography{refs}

\newpage

\onecolumn

\title{Causal Discovery in Mixtures of Populations\\(Supplementary Material)}
\maketitle

\appendix

\section{Worked Example}\label{apx: worked example}

We illustrate all three phases on the small graph $\G$ of Figure~\ref{fig: example_setup}, with a binary mixture source ($k = 2$) and Bernoulli observables $V_1, \ldots, V_6$. The observed edges of $\G$ are $V_4 \rightarrow V_1$, $V_1 \rightarrow V_2$, $V_2 \rightarrow V_6$, $V_4 \rightarrow V_5$, and $V_5 \rightarrow V_6$; the vertex $V_3$ has no observed neighbors. The mixture source $U$ points to every $V_i$, so in the observed marginal \emph{every} pair of variables is statistically dependent and no conditional independence holds. The maximum degree is $\Delta = 2$.

\paragraph{Why a naive approach fails.} Because $U$ confounds all pairs, a standard conditional-independence test never declares any pair independent, and the PC algorithm would return the complete graph. Our rank tests instead recover the structure by detecting, for suitable agglomerations, that the relevant probability matrix has nonnegative rank at most $k = 2$.

\paragraph{Phase I: agglomerated rank tests.} Since the observables are binary and $k = 2$, a single pair of variables yields only a $2 \times 2$ probability matrix, which cannot exceed rank $2$ and is therefore uninformative. We instead form supervariables of $\lceil \lg(k+1) \rceil = 2$ binary variables each, giving a $4 \times 4$ probability matrix.

Consider testing the non-adjacency of $V_1$ and $V_5$. In $\G$ they are d-separated by $\{V_4\}$: the fork $V_1 \leftarrow V_4 \rightarrow V_5$ is blocked by conditioning on $V_4$, and the path $V_1 \rightarrow V_2 \rightarrow V_6 \leftarrow V_5$ is blocked at the unconditioned collider $V_6$. We agglomerate $\vec{S}^+_i = \{V_1, V_2\}$ and $\vec{S}^+_j = \{V_5, V_3\}$ with conditioning set $\vec{C} = \{V_4\}$. One checks that $\vec{S}^+_i \indep^{\G}_d \vec{S}^+_j \given \vec{C}$: every path between the two groups passes through the collider $V_6$ or through $V_4$, both of which are blocked. Hence $(\vec{S}^+_i, \vec{S}^+_j)$ is an IPA. By Lemma~\ref{lem:rank_test}, for any assignment $V_4 = v$,
\[
\nnrank\!\big(\mat{M}[\{V_1,V_2\},\, \{V_5,V_3\} \mid V_4 = v]\big) \leq k = 2,
\]
because conditioning on $U$ as well makes the two groups independent, so the matrix decomposes into a sum of $k = 2$ rank-one terms (Equation~\ref{eq: decompose M}). Detecting this low rank lets Phase I delete all four candidate \emph{cross-group} edges at once---$V_1$--$V_5$, $V_1$--$V_3$, $V_2$--$V_5$, $V_2$--$V_3$---none of which is a true edge, while the within-group edge $V_1$--$V_2$ is untouched. Repeating over agglomerations and conditioning sets removes every non-edge that admits an IPA, leaving the skeleton $\{V_4\text{--}V_1,\, V_1\text{--}V_2,\, V_2\text{--}V_6,\, V_4\text{--}V_5,\, V_5\text{--}V_6\}$ with $V_3$ isolated.

\paragraph{A caveat the example makes concrete.} Our worst-case guarantee (Corollary~\ref{thm: ident}) does \emph{not} apply at this size: with $\Delta = 2,\, k = 2$ it would require $76$ vertices. Indeed, every vertex here is an ``early vertex'' (Definition~\ref{def: early vertices}), since each has fewer than $(2 + \Delta^2)\lceil \lg(k+1) \rceil - 2 = 10$ non-descendants, so Lemma~\ref{lem: all fp in H} gives no useful restriction on FP edges. Nevertheless, generically the rank tests still recover the correct skeleton, consistent with the experiments of Section~\ref{sec: experiments}. This is precisely the gap between the (loose) worst-case bound and typical behavior.

\paragraph{Phase II: $k$-MixProd correction (when needed).} Had Phase I left a false-positive edge between two early vertices $V_i, V_j$ (e.g., because too many candidate separating sets lay among the collider descendants of $V_i, V_j$), we would verify it as follows. Form $\vec{T}_{ij}$ from $V_i, V_j$ and their $\G_1$-neighbors (Definition~\ref{def: T}); find two further agglomerations $\vec{X}_1, \vec{X}_2$ at distance $> 2$ from $\vec{T}_{ij}$ and from each other; and condition on $\vec{Z}_{ij} = \Nb_2(\vec{X}_1) \cup \Nb_2(\vec{X}_2)$ so that $\vec{T}_{ij}, \vec{X}_1, \vec{X}_2$ become mutually independent given $U$ (Lemma~\ref{lem: d-sep 2 neighborhood}). A $k$-MixProd oracle applied to these three groups recovers $\Pr(\vec{T}_{ij} \mid u, \vec{z}_{ij})$. Because $\vec{Z}_{ij}$ may include collider descendants of $V_i, V_j$, we deconfound using Lemma~\ref{lem:remove conditioning} to obtain the unconditioned $\Pr(\vec{T}_{ij} \mid u)$, then run an ordinary conditional-independence test \emph{within} each class $u$. If $V_i \indep V_j \mid \vec{C}, u$ for all $u$, the edge is removed. For the graph above no FP edge arises, so Phase II is vacuous here; the construction on a larger graph is shown in Figure~\ref{fig:example_kmixprod}.

\paragraph{Phase III: orientation.} On the recovered skeleton, $V_2$ and $V_5$ are non-adjacent yet share the child $V_6$, forming the unshielded collider $V_2 \rightarrow V_6 \leftarrow V_5$, which we orient. The remaining edges $V_4\text{--}V_1$, $V_4\text{--}V_5$, and $V_1\text{--}V_2$ are forced neither by an immorality nor by a Meek rule, and so remain undirected within the CP-DAG (the isolated $V_3$ contributes nothing). The output is therefore the MEC of $\G$.

\section{D-separation} \label{apx: dsep}
We will rely on the concepts of \textbf{d-separation},  \textbf{active paths},  and \textbf{separating sets}.  Active paths are defined relative to sets of variables to be conditioned on (``conditioning sets'').  

A key concept in active paths is that of a collider, $C$, which takes the form $V_1 \rightarrow C \leftarrow V_2$ along an undirected path. Undirected paths through unconditioned variables with no colliders are considered active because they can ``carry dependence'' from one end of the path to the other. If a non-collider along one of these non-collider paths is conditioned on, then the path is inactive.

Collider steps of paths behave in the opposite fashion: paths with an unconditioned collider are inactive, whereas conditioning on a collider or any descendant of that collider ``opens'' the active path.

When two variables have an active path between them, we say that they are d-connected. If there are no active paths between the variables, we say they are d-separated.  A separating set between two variables is a set of variables which, when conditioned on, break all active paths between those variables. While only loosely described here, a full precise definition of d-separation can be found in \cite{pearl1988probabilistic} and \cite{pearl2009causality} (for a more extensive study).

\cite{pearl1988probabilistic} uses structural causal models to justify the \emph{local Markov condition}, which means that d-separation always implies independence and allows DAG structures to be factorized. It is possible that two d-connected variables by chance exhibit some unexpected \emph{statistical} independence.  This complication is often assumed away using ``faithfulness'' \citep{spirtes2000causation}, which ensures that d-connectedness implies statistical dependence.  Together, the local Markov condition and faithfulness give a correspondence statistical dependence and the graphical conditions of the causal DAG which can be leveraged for causal structure learning.

The following fact will be useful when arguing for the existance of separating sets.
\begin{lemma}[\cite{pearl2009causality}] \label{lem:parents d-sep}
If vertices $V_i, V_j$ are nonadjacent in $\G$,  either $\Pa^\G(V_i)$ or $\Pa^\G(V_j)$ are a valid separating set for $V_i, V_j$.
\end{lemma}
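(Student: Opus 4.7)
The plan is to reduce to the case where $V_i$ is a non-descendant of $V_j$ and then show that $\Pa^\G(V_j)$ d-separates $V_i$ from $V_j$. Acyclicity of $\G$ guarantees that at least one of $V_i \notin \De^\G(V_j)$ or $V_j \notin \De^\G(V_i)$ holds, so after possibly swapping the roles of $V_i$ and $V_j$, it suffices to prove the statement under the first orientation. Non-adjacency of the pair gives $V_i \notin \Pa^\G(V_j)$, so $V_i$ lies in $\Nd^\G(V_j) \setminus \Pa^\G(V_j)$.

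The cleanest route is to invoke the local Markov property of DAGs, which states that every vertex is d-separated from all its non-descendants (other than its parents) given its parents. Applied to $V_j$, this immediately gives $V_i \indep_d^\G V_j \given \Pa^\G(V_j)$. For a self-contained argument, I would instead show directly that every undirected path $p$ between $V_i$ and $V_j$ is blocked by $\vec{C} := \Pa^\G(V_j)$, by case analysis on the final edge of $p$ at $V_j$. If the last edge is $W \to V_j$, then $W \in \vec{C}$ and $W$ is a non-collider on $p$ (its $V_j$-side edge is outgoing), so conditioning on $W$ blocks $p$. If the last edge is $V_j \to W$, walk along $p$ from $V_j$ until reaching the first vertex $Y$ that is not a descendant of $V_j$; such $Y$ exists because $V_i \notin \De^\G(V_j)$. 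The predecessor $Z$ of $Y$ along $p$ toward $V_j$ lies in $\iDe^\G(V_j)$, and the $Y$--$Z$ edge must point from $Y$ into $Z$ (otherwise $Y$ would inherit $Z$'s descendant status). A short induction along the $V_j$-to-$Z$ sub-path shows the other edge incident to $Z$ on $p$ also points into $Z$, so $Z$ is a collider; since $Z \in \De^\G(V_j)$, neither $Z$ nor any of its descendants can lie in $\Pa^\G(V_j)$, so the collider is inactive and $p$ is blocked at $Z$.

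The main obstacle, such as it is, will be Case B: one needs to argue carefully that the path's traversal through $\De^\G(V_j)$ must terminate at a collider whose entire descendant set is disjoint from the conditioning set. The key geometric observation is that any edge on $p$ between two vertices of $\iDe^\G(V_j)$, when oriented ``outward'' along the descendant ordering, would push descendant-hood onto the next vertex as well, contradicting the choice of $Y$ as the first non-descendant; this forces the incoming orientations that make $Z$ a collider. Everything else is routine, and the lemma itself is classical (indeed cited to \citet{pearl2009causality}), so no new machinery beyond d-separation and acyclicity is required.
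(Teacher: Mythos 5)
The paper offers no proof of this lemma; it is stated as a known result quoted from \citet{pearl2009causality}, so your attempt can only be judged on its own correctness. Your reduction via acyclicity (at least one of the two vertices is a non-descendant of the other) and your first route --- invoking the local Markov property of d-separation, that any vertex is d-separated from its non-parental non-descendants given its parents --- are both correct; together with $V_i \notin \Pa^\G(V_j)$ (non-adjacency), this immediately gives $V_i \indep^{\G}_d V_j \given \Pa^\G(V_j)$.

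Your ``self-contained'' argument, however, has a genuine error in Case B. You claim that $Z$, the predecessor on $p$ of the first non-descendant $Y$, must be a collider, arguing by induction that the $V_j$-side edge at $Z$ also points into $Z$. This is false, because being a descendant of $V_j$ is a property of the graph, not of the path: an edge between two consecutive descendants on $p$ may point \emph{back} toward $V_j$. Concretely, consider the path $V_j \rightarrow Z_1 \leftarrow Z_2 \leftarrow Y \leftarrow V_i$ in a graph that also contains the off-path edge $V_j \rightarrow Z_2$. Then $Z_1, Z_2 \in \De^\G(V_j)$, $Y$ is the first non-descendant on $p$, and $Z = Z_2$ is a chain vertex ($Z_1 \leftarrow Z_2 \leftarrow Y$), not a collider; the path is blocked instead at the earlier collider $Z_1$. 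The standard repair is simpler than your induction: the first edge of $p$ leaves $V_j$, and if every edge of $p$ pointed away from $V_j$ then $V_i$ would be a descendant of $V_j$; hence there is a first edge pointing back toward $V_j$, and the vertex $Z_t$ at which the orientation switches is a collider reachable from $V_j$ by a directed sub-path of $p$, so $Z_t \in \De^\G(V_j)$. Acyclicity gives $\iDe^\G(Z_t) \subseteq \De^\G(V_j)$ and $\De^\G(V_j) \cap \Pa^\G(V_j) = \emptyset$, so neither $Z_t$ nor any of its descendants lies in the conditioning set: the collider is inactive and $p$ is blocked. With this substitution (your Case A is fine), the direct proof goes through.
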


We will often want to bound the cardinality of separating sets relative to the degree bound of the graph ($\Delta$). When dealing with a separating set between two vertices $V_i, V_j$, Lemma~\ref{lem:parents d-sep} implies a simple upper bound of $\Delta$. Separating sets for \emph{sets} (or agglomerations) of vertices are significantly more complicated because conditioning may d-separate some pairs of vertices while d-connecting others.

To unify the treatment of separating sets, we will make use of \textbf{moral graphs}, which can be thought of as undirected equivalents of DAGs \citep{lauritzen1990independence}. 
We will denote the moral graph of $\G$ as $\G^{(m)}$. To transform $\G$ into $\G^{(m)}$, we add edges between all immoralities, i.e., nonadjacent vertices with a common child, sometimes called an unshielded collider. After this, we change all directed edges to undirected edges. 

A very useful fact from \cite{lauritzen1990independence} (also Eq. 1 in \cite{acid1996algorithm}) is that all separating sets $\vec{C} \subseteq \vec{V}$ for $\vec{S}, \vec{S}' \subseteq \vec{V}$ in $\G$ are also separating sets in the moralized subgraph on $\iAn(\vec{S} \cup \vec{S}' \cup \vec{C})$, denoted $(\G[\iAn(\vec{S} \cup \vec{S}' \cup \vec{C})])^{(m)}$.  This transforms complicated active path analysis into simple connectedness arguments on undirected moral graphs (of special subgraphs of $\G$). A convenient consequence of this transformation, which we will use throughout the paper, is Lemma~\ref{lem: delta2 sepset size bound}.
\begin{lemma}\label{lem: delta2 sepset size bound}
If $\G = (\vec{V}, \vec{E})$ has degree bound $\Delta$, then the size of the minimal separating set between $\vec{S}, \vec{S}' \subseteq \vec{V}$ is no larger than $\min(\abs{\vec{S}}, \abs{\vec{S}'}) \Delta^2$.
\end{lemma}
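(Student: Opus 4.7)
The plan is to combine the moral-graph characterization of d-separation (already cited in the excerpt) with Menger's theorem for vertex connectivity. The key preliminary observation is that in the moral graph $\G^{(m)}$, the neighbors of a vertex $V$ are precisely the elements of its Markov boundary $\Mb(V) = \Pa(V) \cup \Ch(V) \cup \Pa(\Ch(V))$. Writing $p = |\Pa(V)|$ and $c = |\Ch(V)|$, the degree bound gives $p + c \leq \Delta$, and every child of $V$ has at most $\Delta - 1$ parents other than $V$. Hence
\[
|\Mb(V)| \;\leq\; p + c + c(\Delta - 1) \;=\; p + c\Delta \;\leq\; p + (\Delta - p)\Delta \;\leq\; \Delta^2.
\]
So every vertex has moral-graph degree at most $\Delta^2$. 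This bound persists inside any ancestral moral graph of $\G$, since moralizing a subgraph can only be a subgraph of $\G^{(m)}$ (no new edges are introduced outside the full moral graph).

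Next I would fix the undirected graph $\mathcal{M}_0 := (\G[\iAn(\vec{S} \cup \vec{S}')])^{(m)}$. By the equivalence quoted from Lauritzen just before the lemma, any $\vec{C} \subseteq \iAn(\vec{S} \cup \vec{S}')$ separates $\vec{S}$ from $\vec{S}'$ in $\G$ if and only if it is a vertex cut between $\vec{S}$ and $\vec{S}'$ in $\mathcal{M}_0$. Restricting attention to candidate separating sets contained in $\iAn(\vec{S} \cup \vec{S}')$ is without loss of generality: adding a non-ancestor to a conditioning set can only open collider paths, so any minimal separating set may be taken inside the ancestral closure.

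Finally I would apply Menger's theorem inside $\mathcal{M}_0$: the minimum vertex cut between $\vec{S}$ and $\vec{S}'$ equals the maximum number of internally vertex-disjoint $\vec{S}$-$\vec{S}'$ paths. Each such path uses an edge incident to $\vec{S}$ as its first edge, and each vertex of $\vec{S}$ has at most $\Delta^2$ neighbors in $\mathcal{M}_0$, so at most $|\vec{S}| \cdot \Delta^2$ internally disjoint paths can exist. Menger thus produces a vertex cut of size at most $|\vec{S}| \Delta^2$, and by Lauritzen this set d-separates $\vec{S}$ from $\vec{S}'$ in $\G$. Swapping the roles of $\vec{S}$ and $\vec{S}'$ gives the symmetric bound $|\vec{S}'|\Delta^2$, and taking the minimum yields the claim.

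The main point requiring care is that Lauritzen's equivalence refers to an ancestral moral graph that depends on $\vec{C}$ itself, whereas the Menger argument needs a single fixed graph. The plan resolves this by working throughout in $\mathcal{M}_0$ and noting that the Menger cut automatically lies in $\iAn(\vec{S} \cup \vec{S}')$, so $\mathcal{M}_0$ agrees with the graph required by Lauritzen for exactly the $\vec{C}$ we construct. The remaining work is just the Markov-boundary counting and the standard ``conditioning on non-ancestors is unhelpful'' reduction, both of which are routine.
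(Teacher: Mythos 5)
Your scaffolding is sound and matches the paper's: the Markov-boundary count giving moral degree at most $\Delta^2$, the observation that moralizing an induced subgraph adds no edges beyond those of $\G^{(m)}$, and the passage through the ancestral moral graph via Lemma~\ref{lem: moral graph dsep} are exactly the ingredients the paper uses. The gap is in the Menger step. Menger's theorem between two vertex \emph{sets} certifies a minimum cut that is allowed to contain vertices of $\vec{S} \cup \vec{S}'$ themselves, and sometimes every minimum cut must. Concretely, take the DAG $S \to W_1 \to T_1$, $S \to W_2 \to T_2$ with $\vec{S} = \{S\}$ and $\vec{S}' = \{T_1, T_2\}$: here your $\mathcal{M}_0$ equals the skeleton, any two $\vec{S}$--$\vec{S}'$ paths share the vertex $S$, so the maximum number of disjoint paths is $1$ and the \emph{unique} minimum cut is $\{S\}$. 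Your final sentence (``Menger thus produces a vertex cut \ldots and by Lauritzen this set d-separates $\vec{S}$ from $\vec{S}'$ in $\G$'') then fails: $\{S\}$ is not a legal conditioning set, since a separating set must be disjoint from both $\vec{S}$ and $\vec{S}'$, and Lemma~\ref{lem: moral graph dsep} does not apply to it. The valid separator here is $\{W_1, W_2\}$, which is strictly larger than the Menger minimum. In short, ``minimum vertex cut'' and ``minimum d-separating set'' are different quantities, and bounding the former does not bound the latter without additional work; this endpoint issue, not the choice of ambient graph, is the point that actually requires care.

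The repair is either to invoke the variant of Menger in which the terminals $\vec{S}\cup\vec{S}'$ are undeletable (which first requires showing that d-separability of $\vec{S},\vec{S}'$ forces $\mathcal{M}_0$ to have no direct $\vec{S}$--$\vec{S}'$ edges), or, much more simply, to drop Menger altogether: your own counting already exhibits the cut explicitly. Every path leaving $\vec{S}$ must pass through $\Nb_1^{\mathcal{M}_0}(\vec{S}) \setminus \vec{S}$, which is disjoint from $\vec{S}$ by construction, disjoint from $\vec{S}'$ whenever any separating set exists (a direct moral edge can never be blocked), and of size at most $\abs{\vec{S}}\Delta^2$ by your degree bound. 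This is precisely the paper's proof: it takes the $1$-neighborhood of $\vec{S}$ in the ancestral moral graph $(\G[\iAn(\vec{S}\cup\vec{S}'\cup\vec{C})])^{(m)}$ for an assumed separating set $\vec{C}$, notes this neighborhood is itself a separating set, and bounds its size by $\abs{\Nb_1^{\G^{(m)}}(\vec{S})} \leq \abs{\vec{S}}\Delta^2$. Menger buys nothing here and is the sole source of the error.
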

Lemma~\ref{lem: delta2 sepset size bound} is a consequence of the maximum increase in the degree of the moral graph. 
\begin{proof}
A key observation is that the moral graph of a subgraph of $\G$ has no additional edges relative to $\G^{(m)}$. That is, if $\G[\vec{W}] = (\vec{W}, \vec{F})$ is a subgraph of $\G = (\vec{V}, \vec{E})$ then the corresponding edge-sets of the moral graphs obey $\vec{F}^{(m)} \subseteq \vec{E}^{(m)}$ because adding vertices cannot have invalidated any previously contained immoralities. 

Abbreviate $(\G[\iAn(\vec{S} \cup \vec{S}' \cup \vec{C})])^{(m)}$ as $\G_{\vec{C}}^{(m)}$. Even though we do not know what $\vec{C}$ is, we know that the $1$-neighborhood of $\vec{S}$ in $\G_{\vec{C}}^{(m)}$ suffices as a separating set.  $\G^{(m)}$ has all of the edges of $\G_{\vec{C}}^{(m)}$, so
\begin{equation}
\Nb_1^{\G_{\vec C}^{(m)}}(\vec{S}) \subseteq \Nb_1^{\G^{(m)}}(\vec{S}).
\end{equation}
Note that $\Nb_1^{\G^{(m)}}(\vec{S})$ is not necessarily a separating set for $\vec{S}, \vec{S}'$ in $\G^{(m)}$, in fact it may include some vertices in $\vec{S}$ itself. However, the size of the separating set is bounded by $\abs{\Nb_1^{\G^{(m)}}(\vec{S})}$, which is no larger than $\abs{\vec{S}} \Delta^2$. As we chose $\vec{S}$ arbitrarily, this bound also holds for $\vec{S}'$.
\end{proof}

Another helpful result from moral graphs is Lemma~\ref{lem: moral graph dsep}, which will help us when we prove the existence of separating sets.

\begin{lemma}[Corollary of Theorem 1 in \cite{acid1996algorithm}] \label{lem: moral graph dsep}
For DAG $\G = (\vec{V}, \vec{E})$, and $\vec{S}, \vec{S}' \subseteq \vec{V}$,  separating sets $\vec{S}, \vec{S}$ in $(\G[\iAn(\vec{S} \cup \vec{S}')])^{(m)}$ are also separating sets in $\G$.
\end{lemma}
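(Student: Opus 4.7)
The plan is to prove the contrapositive: if $\vec{C}$ fails to d-separate $\vec{S}$ and $\vec{S}'$ in $\G$, then $\vec{C}$ cannot separate them in $(\G[\iAn(\vec{S} \cup \vec{S}')])^{(m)}$ either. So I would assume that there is an active path $\pi$ in $\G$ from some $V \in \vec{S}$ to some $V' \in \vec{S}'$ given $\vec{C}$. Because $\vec{C}$ is being considered as a separator inside the moral graph, I may take $\vec{C} \subseteq \iAn(\vec{S} \cup \vec{S}')$; this is the only use of the hypothesis that the separator lives in that vertex set.

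First I would verify that every vertex on $\pi$ belongs to $\iAn(\vec{S} \cup \vec{S}')$. A non-collider $W$ on $\pi$ sits in a chain or fork, so following the outgoing arrow along $\pi$ traces a directed path to $V$ or $V'$, giving $W \in \An(V) \cup \An(V') \subseteq \iAn(\vec{S} \cup \vec{S}')$. A collider $W$ on $\pi$ must be activated by $\vec{C}$, so either $W \in \vec{C}$ or $W$ has a descendant in $\vec{C}$; either way $W \in \iAn(\vec{C}) \subseteq \iAn(\vec{S} \cup \vec{S}')$ by our restriction on $\vec{C}$. So the entire path is visible inside the ancestral subgraph whose moral graph we consider.

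Next I would convert $\pi$ into a walk in $(\G[\iAn(\vec{S} \cup \vec{S}')])^{(m)}$ that avoids $\vec{C}$. Each non-collider step of $\pi$ remains an edge in the moral graph because moralization merely drops arrow directions. For each collider triple $A \to W \leftarrow B$ traversed by $\pi$, moralization inserts an undirected shortcut edge $A - B$, which lives in the restricted moral graph because $A, B, W$ were just shown to lie in $\iAn(\vec{S} \cup \vec{S}')$. Concatenating these edges yields a walk in the moral graph from $V$ to $V'$ whose internal vertices are exactly the non-colliders of $\pi$. Since $\pi$ was active, none of those non-colliders lie in $\vec{C}$. Reducing the walk to a simple path produces a path from $\vec{S}$ to $\vec{S}'$ in the moral graph that avoids $\vec{C}$, contradicting separation.

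The main obstacle is the collider bookkeeping: I need both that the moralization edge $A - B$ survives the restriction to $\iAn(\vec{S} \cup \vec{S}')$ and that substituting this shortcut does not sneak a vertex of $\vec{C}$ into the walk. The first is handled by the ancestral closure argument above. The second follows because a neighbor of a collider on a path is forced to be a non-collider on that path, since the arrow it sends into the collider rules out a double arrowhead at itself, and active-path non-colliders are precisely the vertices that cannot be in $\vec{C}$.
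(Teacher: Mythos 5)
Your overall strategy---prove the contrapositive by converting an active path in $\G$ into a $\vec{C}$-avoiding walk in $(\G[\iAn(\vec{S} \cup \vec{S}')])^{(m)}$, using moralization edges to shortcut colliders---is the standard argument for this direction of the moralization criterion of \cite{lauritzen1990independence}; note the paper itself gives no proof here, citing the result as a corollary of Theorem 1 of \cite{acid1996algorithm}, so you are supplying an argument the paper omits. However, there is a genuine gap in your first step, the claim that every vertex of the active path $\pi$ lies in $\iAn(\vec{S} \cup \vec{S}')$. For a non-collider $W$ you assert that ``following the outgoing arrow along $\pi$ traces a directed path to $V$ or $V'$, giving $W \in \An(V) \cup \An(V')$.'' This is false: the directed trace along $\pi$ can terminate at a \emph{collider} of $\pi$ before reaching an endpoint. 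Concretely, take the DAG with edges $V \to W$, $W \to X$, $V' \to X$, $X \to B$, and let $\vec{S} = \{V, B\}$, $\vec{S}' = \{V'\}$, $\vec{C} = \{X\}$. The path $V \to W \to X \leftarrow V'$ is active given $\vec{C}$, and $W$ is a chain non-collider, yet $W$ is an ancestor of neither endpoint: following its outgoing arrow along $\pi$ dead-ends at the collider $X$.

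The membership you need is still true, but it requires the case you skipped: when the forward trace from a non-collider $W$ stops at a collider $X$ of $\pi$, activeness gives $X \in \iAn(\vec{C})$, hence $W \in \An(X) \subseteq \iAn(\vec{C})$; then $\vec{C} \subseteq \iAn(\vec{S} \cup \vec{S}')$ together with the ancestral closure of $\iAn(\vec{S} \cup \vec{S}')$ yields $W \in \iAn(\vec{S} \cup \vec{S}')$. (In the example above this is exactly how $W$ enters the ancestral set: $W \in \An(X) \subseteq \An(B)$.) In other words, the correct intermediate statement is that every vertex of $\pi$ lies in $\iAn(\{V, V'\} \cup \vec{C})$, which is contained in $\iAn(\vec{S} \cup \vec{S}')$ under your restriction on $\vec{C}$. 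With that repair, the remainder of your argument is sound: non-collider steps of $\pi$ survive moralization as edges, each collider triple $A \to X \leftarrow B$ contributes a moral edge $A - B$ inside the restricted graph, neighbors of colliders on $\pi$ are necessarily non-colliders and hence avoid $\vec{C}$, and reducing the resulting walk to a simple path contradicts separation in the moral graph.
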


\section{Identifying Mixtures of Discrete Products}
\label{apx: kmixprod}
The main tool used in \cite{gordon2023causal} is a solution to identifying discrete $k$-mixtures of product distributions (\textbf{$k$-MixProd}) - i.e. $\vec{X} = X_1, \ldots, X_n$ and latent global confounder or ``source'' $U$ such that $X_i \indep X_j \given U$ for all $i, j$.  The key complexity parameter for identifiability is $k$, the cardinality of the support of $U$.\footnote{We will refer to the cardinality of the support of discrete random variables as their cardinality.}

At its core, $k$-MixProd shows how coincidences of multiple independent events reveal information about their confoundedness.  Of course,  it is possible for $U$ with sufficiently large $k$ to completely control the distribution on $\vec{X}$. For example,  for binary $X_i \in \{0, 1\}$,  a cardinality of $k = 2^n$ would be sufficient to assign each binary sequence in $\vec{X}$ to a latent class in $U$.  Such a powerful $U$ could generate \emph{any} desired probability distribution on $\vec{X}$ by simply controlling the probability distribution on $U$.  Limiting $k$, however, limits the space of marginal probability distributions on $\vec{X}$, eventually giving rise to identifiability.

Under a cardinality bound $k$ on the support of $U$, \cite{allman2009identifiability} showed that $n \geq \Omega(\log(k))$ is sufficient for the generic identification of $k$-MixProd.  In other words,  other than a Lebesgue measure $0$ set of exceptions, most instances of $k$-MixProd have a one-to-one correspondence with their observed statistics (the probability distribution on $\vec{X}$ marginalized over $U$) and generating model (up to a set of $k!$  models with permuted labels of $U$). 

\cite{tahmasebi2018identifiability} demonstrated that a linear lower bound ($n \geq 2k-1$), in conjunction with a separation condition in the distributions of $X_i \given U$, is sufficient to guarantee identifiability.  The best-known algorithm for identification is given in \cite{gordon2024identification}, which nearly matches the known lower bounds for sample complexity. This paper will use the result from \cite{allman2009identifiability}, but our methods easily extend to stronger identifiability conditions with modifications in the sparsity requirements.

\section{HYPOTHESIS TEST FOR RANK}\label{sec: hyp test}
We build a test for the null-hypothesis that $\mat{A} \coloneqq \mat{M}[V, V' \given \vec{C}]$ has rank $\leq k$. For this derivation, we use $V_i = V$ and $V_j=V'$ to free up $i,j$ for indexing entries of matrices. At a high level, the test asks whether the smallest singular values of $\mat{\hat{A}}$ (those beyond the first $k$) are jointly indistinguishable from zero given sampling noise; if so, $\mat{A}$ has rank $\leq k$. In the notation that follows, $\mat{B}$ denotes the (rescaled) estimation error and $\varepsilon(N)$ its scale, $\mat{U}_2$ and $\mat{V}_2$ collect the left/right singular vectors associated with the ``extra'' singular values beyond rank $k$, $\mat{L}$ is the diagonal matrix of those extra singular values (vectorized as $\hat{l}$), and $f$ is the degrees of freedom of the limiting $\chi^2$ distribution. This section will closely follow \citet{ratsimalahelo2001rank}'s test of the rank of a matrix $\mat{A}$ under an asymptotically normal perturbation
\begin{equation}
    \mat{\hat{A}} = \mat{A} + \varepsilon(N) \mat{B}
\end{equation} 
with $\varepsilon(N) \propto 1/\sqrt{N}$ for $N$ samples.

To apply the test of rank to our setting, we break down $\mat{\hat{A}}$ into the sum of indicator functions which are applied to each of $N$ data entries.

\begin{align}
    \mat{C}(v, v')_{i,j} &\coloneqq \frac{1}{N} \Indic{v=i}\Indic{v'=j},\\
    \mat{\hat{A}}_{i, j} &= \sum_{v, v' \in \text{Data}} \mat{C}(v, v')
\end{align}

From this decomposition, $\varepsilon(N) \mat{B} = \mat{A}_{i, j} - \mat{\hat{A}}_{i, j}$ approaches normal with variance proportional to $1/N$ for larger $N$. This means that the covariance of $\mat{B}$ can be approximated using the covariance matrix for the entries of $\mat{C}$, which are Bernoulli random variables with probability estimates given by the entries of $\mat{\hat{A}}$.

\begin{align}\label{eq: bernouli var}
 \Cov(\mat{C}_{ij}, \mat{C}_{i'j'})= \begin{cases} \mat{A}_{ij} (1 - \mat{A}_{ij}) & \text{if }i=i', j=j'\\
 -\mat{A}_{ij}\mat{A}_{i'j'} &\text{otherwise.}
\end{cases}
\end{align}

First, take the SVD of the empirical $m \times n$ matrix, $\mat{\hat{A}} = \mat{U}\mat{D}\mat{V}^\top$, with the singular values ordered decreasingly along the diagonal of $\mat{D}$. Let $\mat{U}_2$ (columns $k+1,\dots,m$ of $\mat{U}$) and $\mat{V}_2$ (columns $k+1,\dots,n$ of $\mat{V}$) collect the \emph{trailing} left and right singular vectors---those associated with the ``extra'' singular values beyond the first $k$. Let $\mat{L} \coloneqq \mat{U}_2^\top \mat{\hat{A}} \mat{V}_2$ be the (diagonal) matrix of those extra singular values, and vectorize it to $\hat{l}$ by stacking its columns.

\citet{ratsimalahelo2001rank}'s estimator uses the covariance matrix of the entries of $\mat{\hat{B}} - \mat{B}$, denoted $\Sigma$. For our setting, we substitute the covariance matrix for the entries of $\mat{C}$ whose empirical estimates are given by Equation~\ref{eq: bernouli var}.
\begin{equation}
\Sigma_{i + jm,  i' + j'm} \coloneqq \Cov(\mat{C}_{ij}, \mat{C}_{i'j'}).
\end{equation}

Now, with $(\cdot)^\dag$ indicating the Moore-Penrose pseudoinverse and $\otimes$ indicating the Kronecker product, define the projected covariance
\begin{equation}
\mat{\hat{Q}} \coloneqq (\mat{V}^\top_2 \otimes \mat{U}^\top_2)\, \hat{\Sigma}\, (\mat{V}_2 \otimes \mat{U}_2).
\end{equation}
Following \citet{ratsimalahelo2001rank}, the statistic $N \hat{l}^\top \mat{\hat{Q}}^{\dag}\hat{l}$ converges to $\chi^2_{f}$ under $N$ samples, where the degrees of freedom $f = (m - k)(n - k)$ count the tested ``extra'' singular directions (for the square $m \times m$ case, $f = (m-k)^2$). Equivalently, $f$ is the rank of $\mat{\hat{Q}}$. Note that the pseudoinverse is taken \emph{after} projecting the covariance onto the extra singular directions, not before. This gives us a hypothesis test that is specifically designed for the rank test from this section. This approach is tested and compared to thresholding the $k+1$th singular value in Section~\ref{sec: experiments}.

\section{Further Runtime Discussion}
\label{apx: further runtime discussion}
The non-negative rank of a $k+1 \by k+1$ matrix (as used for our algorithm) can be solved in time $k^{\mathcal{O}(k^2)}$ \citep{moitra2016almost}. In the absence of non-negative rank tests,  \cite{anandkumar2012learning} demonstrated that regular rank tests generally work well in place of non-negative rank tests in practice.  We will develop a hypothesis test for matrix-rank that requires $\mathcal{O}(k^6)$ operations to invert a $(k+1)^2 \times (k+1)^2$ covariance matrix of the estimated elements of our $k+1 \by k+1$ matrix.

Our solution also requires solving  $k$-MixProd on 3 variables of cardinality $\mathcal{O}(k)$, which corresponds to decomposing a $\mathcal{O}(k) \times \mathcal{O}(k) \times \mathcal{O}(k)$ tensor into rank $1$ components. When the rank of such a tensor is known to be linear in $k$,  this decomposition can be solved in $O(k^{6.05})$ \citep{ding2022fast}, though the problem generally suffers from instability, with a worst-case sample complexity that is exponential in $k$ \citep{RSS14}.  This step may be considered optional, as it is used to refine a small number of ``false-positive'' adjacencies confined to a provably small subset of the DAG.

\section{Utilizing $\G_1$ to set up $k$-MixProd}\label{apx: set up kmixprod}

The first step to recovering $\Pr(\vec{T}_{ij} \given u)$ is to select some $\vec{Z}_{ij}$ and recover $\Pr(\vec{T}_{ij} \given u, \vec{z}_{ij})$ using instances of $k$-MixProd induced on the conditional probability distribution $\Pr(\vec{V} \given \vec{z}_{ij})$.  Recall that $k$-MixProd requires three independent variables of sufficient cardinality.  Hence,  we must find $\vec{X}_1, \vec{X}_2,  \vec{T}_{ij}$ which are sufficiently large, and d-separated from each other by $\vec{Z}_{ij}$ in $\G$. See Figure~\ref{fig:example_kmixprod} for an example.

\begin{figure}[h]
    \centering
    \scalebox{.7}{
    \begin {tikzpicture}[-latex ,auto ,node distance =1.5 cm and 1.5 cm ,on grid , very thick, state/.style ={ circle, draw, minimum width =.5 cm}, cstate/.style ={ circle, draw, minimum width =.5 cm, fill=black, text=white}]
    		\filldraw[ultra thick, color = red, fill opacity = .1, text opacity = 1] (5.05,-4.05) -- (0.8,-4.05) -- (0.8,-2.6) -- (-2.1,-2.6)-- (-2.1,-1.45)-- (5.05,-1.45) --cycle
            node[below left] {collider descendants};
            \filldraw[color=blue, fill opacity = .1, very thick, text opacity = 1](-5.1,-4.1) rectangle (-.8,-1.3 ) node[above] {$\vec{T}_{ij}$};
            \filldraw[color=blue, fill opacity = .1, very thick, text opacity = 1](.9,-4.1) rectangle (2.1,-1.3 )
            node[above] {$\vec{X}_1$};
             \filldraw[color=blue, fill opacity= .1, very thick, text opacity = 1](3.9,-4.1) rectangle (5.1,-1.3 )
            node[above] {$\vec{X}_2$};
            \node[state, dashed, blue] (U) {$U$};
            \node[state] (V5) [below left = 2cm and 1.5cm of U] {$V_5$};
            \node[state] (V6) [below = of V5] {$V_6$};
            \node[state] (V3) [left  = of V5] {$V_3$};
            \node[state] (V4) [below = of V3] {$V_4$};
            \node[state] (V1) [left = of V3] {$V_1$};
            \node[state] (V2) [below = of V1] {$V_2$};
            \node[cstate] (V7) [right = of V5] {$V_7$};
            \node[cstate] (V8) [below = of V7] {$V_8$};
            \node[state] (V9) [right = of V7] {$V_9$};
            \node[state] (V10) [below = of V9] {$V_{10}$};
            \node[cstate] (V11) [right = of V9] {$V_{11}$};
            \node[cstate] (V12) [below = of V11] {$V_{12}$};
            \node[state] (V13) [right = of V11] {$V_{13}$};
            \node[state] (V14) [below = of V13] {$V_{14}$};
            \path[dotted, red, line width = 1mm] (V3) edge[-] (V4);
            \path (V1) edge (V3) (V1) edge (V4) (V2) edge (V4) (V3) edge (V5) (V4) edge (V5) (V6) edge (V8);
            \path (V5) edge (V7) (V7) edge (V10) (V7) edge (V9) (V8) edge (V9) (V10) edge (V12) (V9) edge (V12) (V9) edge (V12) (V11) edge (V13) (V13) edge (V14) (V12) edge (V14) (V9) edge (V11);
            \path[blue] (U) edge[bend right=20] (-2.95, -1.3);
            \path[blue] (U) edge[bend left=20] (1.45, -1.3);
            \path[blue] (U) edge[bend left=20] (4.45 ,-1.3);
    \end{tikzpicture}
    }
    \caption{The given graph has an FP edge between $V_3$ and $V_4$, indicated by a dashed line. This FP edge is induced by a large set of collider descendants, shown in red.  Conditioning on $\vec{Z}_{ij} = \{V_7, V_8, V_{11}, V_{12}\}$ creates an instance of $k$-MixProd on $\vec{T}_{ij}, \vec{X}_1, \vec{X}_2$.  Notice that $\vec{Z}_{ij}$ are all collider descendants of $V_3, V_4$, which means that the conditional distribution, $\Pr(\vec{T}_{ij} \given \vec{z}_{ij}, u)$, recovered by $k$-MixProd will not be sufficient for detecting the FP edge. This obstacle will be solved in Subsection~\ref{sec:final_adjustment}.} \label{fig:example_kmixprod} 
 \end{figure}

Of course, we have access to $\G_1$, not $\G$.  $\G_1$ contains no orientations\footnote{It is, in principle,  possible to orient immoralities within $\G_1$ at this stage, but this gives no complexity improvements.} and may contain extra false-positive edges. We will need to build a conditioning set $\vec{Z}_{ij}$ that guarantees an instance of $k$-MixProd. While Markov boundaries cannot be computed without $\G$,  we can easily use $\G_1$ to find a superset that \emph{contains} the Markov boundary of a given vertex.
\begin{lemma}\label{lem: 2 neighborhood has MB}
The $2$-neighborhood of $\vec{X} \subseteq \vec{V}$ in $\G_1$ contains $\Mb^\G(\vec{X})$.
\end{lemma}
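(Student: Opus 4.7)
The plan is to exploit two elementary facts: (i) Phase I is sound, in the sense that it never deletes a true edge of $\G$, so the skeleton of $\G_1$ is a supergraph of the skeleton of $\G$; and (ii) the Markov boundary of a vertex is contained in its $2$-neighborhood in the skeleton of $\G$ itself. Combining these gives the result immediately.

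First I would argue soundness. Algorithm~\ref{alg:phase1} removes an edge between $V_i$ and $V_j$ only when some supervariable pair $(\vec S, \vec S')$ containing $V_i$ and $V_j$ satisfies $\nnrank(\mat{M}[\vec S, \vec S' \given \vec c]) \le k$ for an arbitrary assignment $\vec c$ of some $\vec C$. By the contrapositive of Lemma~\ref{lem:nnrank_dependence}, generically this forces $\vec S \indep^{\G}_d \vec S' \given \vec C$, hence in particular $V_i \indep^{\G}_d V_j \given \vec C$. A d-separated pair is necessarily non-adjacent in $\G$, so no true edge of $\G$ is ever deleted. Consequently the undirected skeleton of $\G_1$ contains every undirected edge of $\G$.

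Next I would unpack the definition of the Markov boundary. Fix $X \in \vec X$ and $W \in \Mb^{\G}(X) = \Pa^{\G}(X) \cup \Ch^{\G}(X) \cup \Pa^{\G}(\Ch^{\G}(X))$. If $W$ is a parent or child of $X$, then $W$ is adjacent to $X$ in $\G$, hence at distance $1$ from $X$ in the skeleton of $\G$. If $W$ is a parent of some child $C \in \Ch^{\G}(X)$, then the skeleton of $\G$ contains both the edge $\{X, C\}$ and the edge $\{W, C\}$, placing $W$ at distance at most $2$ from $X$. Since the skeleton of $\G_1$ contains every such edge by the previous paragraph, any walk of length at most $2$ in the skeleton of $\G$ is also a walk of length at most $2$ in $\G_1$, so $W \in \Nb_2^{\G_1}(X)$. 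Taking the union over $X \in \vec X$ yields $\Mb^{\G}(\vec X) \subseteq \Nb_2^{\G_1}(\vec X)$.

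There is no real obstacle here; the only subtlety is verifying that Phase I's deletion criterion is sound under the generic-faithfulness assumption, which is exactly what Lemmas~\ref{lem:nnrank_independence} and \ref{lem:nnrank_dependence} (assembled in Lemma~\ref{lem:rank_test}) were designed to provide. Everything else is a standard graph-theoretic observation: the Markov boundary is a subset of the distance-$2$ neighborhood in the skeleton, and passing to a skeleton supergraph can only shrink distances.
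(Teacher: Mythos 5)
Your proof is correct and follows essentially the same route as the paper's: both rest on the facts that $\vec{E}_1 \supseteq \vec{E}$ (so distances in $\G_1$ are no larger than in $\G$) and that every vertex of $\Mb^\G(\vec{X})$ lies within distance $2$ of some $X \in \vec{X}$ in $\G$. The only difference is that you re-derive the soundness of Phase I (no true edge is ever deleted) from Lemma~\ref{lem:nnrank_dependence}, whereas the paper takes $\vec{E}_1 \supseteq \vec{E}$ as already established when it defines FP edges; this is a harmless, slightly more self-contained presentation of the same argument.
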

\begin{proof}
The distance between $X\in \vec{X}$, and $V \in \vec{V}$ in $\G_1$ is less than or equal to the distance in $\G$, because $\vec{E}_1 \supseteq \vec{E}$. This means that the $2$-neighborhood of $\vec{X}$ in $\G_1$ includes the $2$-neighborhood of $\vec{X}$ in $\G$. Furthermore because all vertices in  $\Mb^\G(\vec{X})$ are distance $\leq 2$ from at least one $X \in \vec{X}$, we have that $\Mb^\G(\vec{X})$ is contained in the $2$-neighborhood of $\vec{X}$ in $\G_1$.
\end{proof}
Recall that the definition of a Markov boundary of $\vec{X}$ is a set such that all supersets of the Markov boundary d-separate every other variable from $\vec{X}$. As such, three sets can be d-separated by conditioning on the Markov boundaries of just two of those sets. The following lemma makes this statement precise for $2$-neighborhoods.
\begin{lemma} \label{lem: d-sep 2 neighborhood}
    Suppose we have three disjoint sets $\vec{X}_0, \vec{X}_1, \vec{X}_2$ such that for all $X_i \in \vec{X}_i, X_j \in \vec{X}_j$ ($i \neq j \in \{0, 1, 2\}$), $X_i$ and $X_j$ are distance $>2$. For an arbitrary (disjoint) choice of $\vec{Z}, \vec{Z}' \in \{\Nb_2(\vec{X}_0), \Nb_2(\vec{X}_1), \Nb_2(\vec{X}_2)\}$, then $\vec{X}_0, \vec{X}_1, \vec{X}_2$ are mutually independent given $\vec{Z} \cup \vec{Z}'$.
\end{lemma}
\begin{proof}
    Because both sets are distance $>2$ apart, the 2-neighborhoods do not overlap with either set. By Lemma~\ref{lem: 2 neighborhood has MB}, the conditioning set contains the Markov boundaries for both sets, so they are d-separated.
\end{proof}

A worst-case analysis leads to the following minimum number of vertices for identifiability.

\begin{lemma} \label{lem: vertices needed for phase 2}
If $\abs{\vec{V}} \geq (2\Delta + \Delta^3) \lceil \lg(k + 1) \rceil + 2(\Delta^2 + \Delta + 1) \lceil \lg(k) \rceil + 2 \Delta^2 + 2\Delta^3$, then there exists $\vec{X}_1, \vec{X}_2$ each with cardinality $\lceil \lg(k) \rceil$ such that $\vec{X}_1, \vec{X}_2, \vec{T}_{ij}$ are mutually independent given $\Nb_2(\vec{X}_1) \cup \Nb_2(\vec{X}_2)$.
\end{lemma}
\begin{proof}
The proof will involve multiple applications of the fact that a degree bound $\Delta$ allows us to bound $\abs{\Nb_1(\vec{X})} \leq \Delta \abs{\vec{X}}$. We will also use $\Nb_2(\vec{X}) = \Nb_1(\Nb_1(\vec{X})) \cup \Nb_1(\vec{X})$.

The added complexity comes from the fact that our degree bound $\Delta$ applies to $\G$, not $\G_1$, which may have extra FP edges within the early vertices $\vec{H}$. The worst-case scenario involves an $\vec{H}$ that is made a clique in $\G_1$, with all possible FP edges present. As such, one can count the maximum number of vertices needed to find a $\vec{X}_1$ and $\vec{X}_2$ with the distance constraints needed for Lemma~\ref{lem: d-sep 2 neighborhood} (separation by distance $2$).

In order for $\vec{X}_1$ and $\vec{X}_2$ to be distance $2$ from $\vec{T}_ij$, $\Nb_2(\vec{T}_{ij})$ must be disjoint from $\vec{X}_1$ and $\vec{X}_2$. $\Nb_2(\vec{T}_{ij})$'s cardinality can be bounded by first decomposing the $1$-neighborhood into vertices in $\vec{H}$ and vertices not in $\vec{H}$.
\begin{equation}
    \Nb_1(\vec{T}_{ij}) = \left(\Nb_1(\vec{T}_{ij}) \cap \vec{H} \right) \cup \left(\Nb_1(\vec{T}_{ij}) \setminus \vec{H} \right).
\end{equation}
Notice that the first term is the full set $\vec{H}$ if $\vec{H}$ is a clique, since $\vec{T}_{ij} \subseteq \vec{H}$. The cardinality of the second term can be bounded using $\Delta$, because it does not involve any potential for FP edges. 
\begin{equation}
    \Nb_1(\vec{T}_{ij}) \leq \abs{\vec{H}} + \Delta \abs{\vec{T}_{ij}}.
\end{equation}
When extending this neighborhood to the $2$-neighborhood, we have already exhausted all of $\vec{H}$, meaning that the degree bound of $\Delta$ still applies. The degree of the vertices in $\vec{H}$ is now $\Delta + 1$, meaning that those vertices have $(\Delta - 1) \abs{\vec{H}}$ distance 2 neighbors, plus no more than $\vec{H}$ distance 1 neighbors. Similarly, $\vec{T}_{ij}$ has no more than $\Delta^2\vec{T}_{ij}$ distance 2 neighbors and no more than $\Delta \vec{T}_{ij}$ distance 1 neighbors. This gives,
\begin{equation}
    \Nb_2(\vec{T}_{ij}) + \abs{\vec{T}_{ij}} \leq (\Delta + 1 - 1) \abs{\vec{H}} + (\Delta^2 + \Delta + 1) \abs{\vec{T}_{ij}}.
\end{equation}
Lemma~\ref{obs: size of h and degree} bounds the cardinality of $\vec{H}$. The cardinality of $\vec{T}_{ij}$ is no more than two vertices for the $V_i, V_j$ in question, and their distance one neighborhood, which includes connections to no more than $\Delta-1$ additional vertices (see Definition~\ref{def: T}), which means $\abs{\vec{T}_{ij}} \leq 2 \Delta$. Combining these gives,
\begin{align*}
    \Nb_2(\vec{T}_{ij}) &\leq \Delta ((2 + \Delta^2) \lceil \lg(k + 1) \rceil -2) + (\Delta^2 + \Delta + 1) \cdot 2\Delta = (2\Delta + \Delta^3) \lceil \lg(k + 1) \rceil + 2 \Delta^2 + 2\Delta^3.
\end{align*}
We need no more than an additional $(\Delta^2 + \Delta + 1) \lceil \lg(k) \rceil$ vertices for each $\abs{\vec{X}_1} = \abs{\vec{X}_2} = \lceil \lg(k) \rceil$ and their $2$-neighborhoods. Summing all these together gives the total number of vertices needed.
\end{proof}

Having now shown that two $\vec{X}_1, \vec{X}_2$ of cardinality $\lceil \lg(k) \rceil$ may be isolated, we note that these two variables can be agglomerated into super-variables $\vec{S}_1, \vec{S}_2$ with cardinality at least $k$ each. Combined with $\vec{T}_{ij}$, which may be agglomerated into a variable $\vec{S}_3$ with at least cardinality $2$ (trivially), all three $\vec{S}_1, \vec{S}_2, \vec{S}_3$ are conditionally independent and meet the requirements of Lemma~\ref{lemma: kruskal} for generic identification of $\Pr(\vec{T}_{ij} \given U)$. This proves the vertex requirement portion of Corollary~\ref{thm: ident} and Theorem~\ref{lem: phase II runtime and vertex requirement}.

\subsection{Aligning multiple $k$-MixProd runs}
$k$-MixProd distributions are symmetric with respect to the $k!$ permutations on the label of their source. For this reason, there is no guarantee that multiple calls to a $k$-MixProd solver will return the same permutation of source labels.  

To solve this, \cite{gordon2023causal} noticed that any two solutions to $k$-MixProd problems that share the same conditional probability distribution for at least one ``alignment variable'' can be ``aligned'' by permuting the source labels until the distributions on that variable match up.  We will only need alignment along runs for different assignments to each $\vec{Z}_{ij}$,  used in the next section.  Explicitly,  two assignments $\vec{z}_{ij}$ and $\vec{z}'_{ij}$, need least one $\vec{X}^* \in \{\vec{T}_{ij}, \vec{X}_1, \vec{X}_2\}$ such that $\mb_{\vec{z}_{ij}}(\vec{X}^*)$ and $\mb_{\vec{z}'_{ij}}(\vec{X}^*)$ are the same, in order for alignability to be satisfied.

To align sets of $k$-MixProd results which are not all pairwise alignable, \cite{gordon2023causal} introduced the concept of an ``alignable set of runs'' for which chains of alignable pairs allow alignability.

\begin{lemma} \label{lem: alignability}
The set of $k$-MixProd instances on the same $\vec{T}_{ij}, \vec{X}_1, \vec{X}_2$ with all possible assignments $\vec{z}_{ij}$ to $\vec{Z}_{ij}$ is alignable.
\end{lemma}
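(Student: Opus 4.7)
The plan is to show that any two $k$-MixProd runs, corresponding to assignments $\vec{z}_{ij}$ and $\vec{z}'_{ij}$ of $\vec{Z}_{ij}$, are connected by a chain of pairwise alignable intermediate runs. The starting observation is that, because $\vec{T}_{ij}, \vec{X}_1, \vec{X}_2$ are mutually $d$-separated given $\vec{Z}_{ij}$ in $\G$, the conditional distribution $\Pr(\vec{X}^* \mid u, \vec{z}_{ij})$ for any $\vec{X}^* \in \{\vec{T}_{ij}, \vec{X}_1, \vec{X}_2\}$ depends on $\vec{z}_{ij}$ only through its restriction to $\Mb^{\G}(\vec{X}^*) \cap \vec{Z}_{ij}$. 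Consequently, if $\vec{z}, \vec{z}'$ agree on this intersection for some $\vec{X}^*$, the induced conditional distributions on $\vec{X}^*$ coincide, making the two runs pairwise alignable via $\vec{X}^*$ in the sense of \citet{gordon2023causal}.

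Next, I would build a chain from $\vec{z}_{ij}$ to $\vec{z}'_{ij}$ by modifying coordinates of $\vec{Z}_{ij}$ one at a time. A single-coordinate change at $Z \in \vec{Z}_{ij}$ leaves the two endpoints of that chain-edge agreeing on $\Mb^{\G}(\vec{X}^*) \cap \vec{Z}_{ij}$ for every $\vec{X}^*$ with $Z \notin \Mb^{\G}(\vec{X}^*)$, so the chain-edge is pairwise alignable provided at least one of $\Mb^{\G}(\vec{T}_{ij}), \Mb^{\G}(\vec{X}_1), \Mb^{\G}(\vec{X}_2)$ omits $Z$. Connectivity of the alignment graph then reduces to the structural claim
\[
\Mb^{\G}(\vec{T}_{ij}) \cap \Mb^{\G}(\vec{X}_1) \cap \Mb^{\G}(\vec{X}_2) \cap \vec{Z}_{ij} = \emptyset.
\]

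To establish this claim I would combine two ingredients: the mutual $d$-separation $\vec{T}_{ij} \indep^{\G}_d \vec{X}_1 \indep^{\G}_d \vec{X}_2 \mid \vec{Z}_{ij}$, and the pairwise graph-distance lower bound $d_{\G}(\vec{X}_\alpha, \vec{X}_\beta) \geq d_{\G_1}(\vec{X}_\alpha, \vec{X}_\beta) > 2$ enforced by Algorithm~\ref{alg:kmixprod_construction} (using $\vec{E} \subseteq \vec{E}_1$, so $\G_1$-distance lower-bounds $\G$-distance). A case analysis on the type of boundary membership (parent of, child of, or spouse-through-common-child of) of $Z$ with respect to each of the three supervariables then forces a contradiction: if $Z$ is a shared child of $X_\alpha \in \vec{X}_\alpha$ and $X_\beta \in \vec{X}_\beta$, the length-two collider path $X_\alpha \to Z \leftarrow X_\beta$ is activated by $Z \in \vec{Z}_{ij}$, directly violating the $d$-separation $\vec{X}_\alpha \indep^{\G}_d \vec{X}_\beta \mid \vec{Z}_{ij}$; if $Z$ is a parent of two of the supervariables, the fork through $Z$ places those supervariables within $\G$-distance $2$, violating the distance bound.

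The hard part will be closing the case analysis for configurations in which $Z$ is simultaneously a ``spouse-by-common-child'' of all three supervariables, since then the activating paths are longer and their intermediate colliders $W_\alpha$ may in principle be blocked. For those residual cases I would argue that the auxiliary colliders $W_\alpha$, being adjacent both to $Z$ and to the corresponding supervariable, are themselves forced into $\Nb_2^{\G_1}(\vec{X}_1 \cup \vec{X}_2) \subseteq \vec{Z}_{ij}$, which opens a collider chain $X_\alpha \to W_\alpha \leftarrow Z \to W_\beta \leftarrow X_\beta$ that $d$-connects two supervariables given $\vec{Z}_{ij}$, again contradicting the algorithm's guarantee. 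As a sanity check I would also verify the argument against a three-stage morph of $\vec{z}_{ij}$ into $\vec{z}'_{ij}$ (first matching outside $\Mb^{\G}(\vec{T}_{ij})$, then outside $\Mb^{\G}(\vec{X}_1)$, then outside $\Mb^{\G}(\vec{X}_2)$), whose closure depends on exactly the same triple-intersection identity.
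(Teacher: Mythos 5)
Your chain-of-single-flips strategy is exactly the paper's proof: the paper's entire argument is that any two assignments differing in one variable are alignable, and that such pairs chain together. You go further than the paper in recognizing that single-flip alignability is not automatic, and in reducing it to the claim that no $Z \in \vec{Z}_{ij}$ can be relevant to all three of $\vec{T}_{ij}, \vec{X}_1, \vec{X}_2$ simultaneously (the paper asserts the single-flip step with no justification at all). That diagnosis of the crux is correct.

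However, your resolution of the case you yourself flag as hard is wrong, and the error is a basic d-separation one. On the concatenated path $X_\alpha \to W_\alpha \leftarrow Z \to W_\beta \leftarrow X_\beta$, the vertex $Z$ is a fork, i.e.\ a \emph{non-collider}; since $Z \in \vec{Z}_{ij}$ is conditioned on, this path is \emph{blocked}, regardless of the fact that the colliders $W_\alpha, W_\beta$ are opened by conditioning. So no violation of $\vec{X}_\alpha \indep_d \vec{X}_\beta \given \vec{Z}_{ij}$ follows, and in fact no case analysis can rule this configuration out, because it is realizable under all of Algorithm~\ref{alg:kmixprod_construction}'s guarantees. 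Concretely: let $Z$ have children $W_T, W_1, W_2$, where $W_T$ is also a child of $T \in \vec{T}_{ij}$, $W_1$ also a child of $X_1 \in \vec{X}_1$, $W_2$ also a child of $X_2 \in \vec{X}_2$, and add the edge $W_T \to W_1$ so that $W_T \in \Nb_2(\vec{X}_1)$. Then $\vec{Z}_{ij} \supseteq \{Z, W_T, W_1, W_2\}$, all pairwise distances among $\vec{T}_{ij}, \vec{X}_1, \vec{X}_2$ are at least $3$, and the required d-separations hold, since every cross path is blocked at the conditioned non-colliders $Z$ or $W_T$. Yet $Z$ lies in all three Markov boundaries, and flipping $Z$ d-connects it to each of $T$, $X_1$, $X_2$ given $\vec{Z}_{ij} \setminus \{Z\}$ and $U$ (each length-two path $Z \to W \leftarrow \cdot$ has its collider conditioned), so generically all three conditional distributions change and the two runs admit no alignment variable. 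Thus the triple-intersection identity your argument reduces to is false as stated, and your approach cannot be closed; what your difficulty correctly senses is that the paper's own two-sentence proof silently skips this same step, so a repair needs either a stronger construction of $\vec{Z}_{ij}$ or a different alignment invariant rather than a sharper case analysis. A secondary issue: your ``starting observation'' also fails for $\vec{X}^* = \vec{T}_{ij}$, since $\Mb^{\G}(\vec{T}_{ij})$ need not be contained in $\vec{Z}_{ij}$, so $\Pr(\vec{T}_{ij} \given u, \vec{z}_{ij})$ can depend on coordinates of $\vec{z}_{ij}$ outside $\Mb^{\G}(\vec{T}_{ij}) \cap \vec{Z}_{ij}$.
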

\begin{proof}
Any two runs with assignments $\vec{z}_{ij}$ and $\vec{z}'_{ij}$ that differ in their assignment to only one variable are alignable. Therefore, any two non-alignable runs can be aligned using a chain of Hamming-distance one alignments.
\end{proof}

\subsection{Recovering the unconditioned within-source distribution} \label{sec:final_adjustment}
After all our calls to the $k$-MixProd oracle, we have access to $\Pr(\vec{T}_{ij} \given u, \vec{z}_{ij})$ and $\Pr(u \given \vec{z}_{ij})$ for every assignment $\vec{z}_{ij}$ and $u$.
$\Pr(\vec{T}_{ij} \given u, \vec{z}_{ij})$ is insufficient to determine the adjacency of $V_i, V_j$ because $\vec{Z}_{ij}$ may contain vertices in the collider descendants of $V_i, V_j$,  prohibiting the discovery of a separating set within $\vec{T}_{ij}$.

Instead, we must recover $\Pr(\vec{T}_{ij} \given u)$, which is not conditioned on $\vec{Z}_{ij}$. To do this, we can apply the law of total probability over all possible assignments to $\vec{Z}_{ij}$.

\begin{equation}
\Pr(\vec{T}_{ij} \given u) = \sum_{\vec{z}_{ij}} \Pr(\vec{z}_{ij} \given u) \Pr(\vec{T}_{ij} \given \vec{z}_{ij}, u) 
\end{equation}

We can obtain $\Pr(\vec{z}_{ij} \given u)$ by using Bayes rule on the $k$-MixProd output, $\Pr(u \given \vec{z}_{ij})$.

\begin{equation}
\Pr(\vec{z}_{ij} \given u) = \frac{\Pr(u \given \vec{z}_{ij}) \Pr(\vec{z}_{ij})}{\Pr(u)}.
\end{equation}

 $\Pr(\vec{z}_{ij})$ can be obtained by counting the frequency of $\vec{z}_{ij}$ in the data. In addition,  $\Pr(u) = \sum_{\vec{z}_{ij}} \Pr(\vec{z}_{ij}) \Pr(u \given \vec{z}_{ij})$ is computable by the law of total probability after the runs for each assignment $\vec{z}_{ij}$, have been aligned.  Alternatively,  we can observe that all $\Pr(\vec{z}_{ij} \given u)$ have the same $\Pr(u)$ in their denominator, so we can scale these values so that  $\sum_{\vec{z}_{ij}}\Pr(\vec{z}_{ij} \given u) = 1$. 

\begin{lemma} \label{lem:remove conditioning}
We can compute $\Pr(\vec{T}_{ij} \given u)$ using known quantities, 
\begin{equation*}
 \Pr(\vec{T}_{ij} \given u)  = \frac{\sum_{\vec{z}_{ij}} \Pr(u \given \vec{z}_{ij}) \Pr(\vec{z}_{ij}) \Pr(\vec{T}_{ij} \given \vec{z}_{ij}, u)}{ \sum_{\vec{z}_{ij}} \Pr(\vec{z}_{ij}) \Pr(u \given \vec{z}_{ij})}.
\end{equation*}
\end{lemma}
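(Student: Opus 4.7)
The plan is to proceed by an elementary manipulation of probabilities, with the only nontrivial ingredient being the alignment of the many $k$-MixProd runs so that the symbol $u$ refers to the same latent class across every assignment $\vec{z}_{ij}$. Essentially all of the steps are laid out in the discussion preceding the lemma, so the proof is mostly a matter of chaining them together carefully.

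First, I would invoke the law of total probability to write
\begin{equation*}
\Pr(\vec{T}_{ij} \mid u) = \sum_{\vec{z}_{ij}} \Pr(\vec{z}_{ij} \mid u)\,\Pr(\vec{T}_{ij} \mid \vec{z}_{ij}, u),
\end{equation*}
noting that the first factor is not yet expressed in terms of oracle outputs. To rewrite it, I would apply Bayes' rule to get $\Pr(\vec{z}_{ij} \mid u) = \Pr(u \mid \vec{z}_{ij}) \Pr(\vec{z}_{ij}) / \Pr(u)$. The quantities $\Pr(u \mid \vec{z}_{ij})$ are exactly the mixing weights returned by the $k$-MixProd oracle applied to the conditional distribution $\Pr(\vec{V} \mid \vec{z}_{ij})$, and $\Pr(\vec{z}_{ij})$ is just the empirical marginal over $\vec{Z}_{ij}$.

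Next I would eliminate $\Pr(u)$ by applying the law of total probability a second time, $\Pr(u) = \sum_{\vec{z}_{ij}} \Pr(\vec{z}_{ij}) \Pr(u \mid \vec{z}_{ij})$, and substitute this expression into the denominator. Putting everything together yields exactly the claimed formula, in which every factor on the right-hand side is either the output of a $k$-MixProd call or can be estimated directly from the observed data.

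The main subtlety, and the step I would be most careful about, is that the label $u$ must have a consistent meaning across the different assignments $\vec{z}_{ij}$ to $\vec{Z}_{ij}$. Each $k$-MixProd call identifies the mixture only up to a permutation of $\{1, \ldots, k\}$, so naively summing over $\vec{z}_{ij}$ would add apples to oranges. This is resolved by Lemma~\ref{lem: alignability}, which guarantees that the entire collection of runs parameterized by $\vec{z}_{ij}$ is alignable via chains of Hamming-distance-one alignment variables (the shared coarsening $\vec{T}_{ij}$, $\vec{X}_1$, or $\vec{X}_2$). Once this alignment is performed, all of the $\Pr(u \mid \vec{z}_{ij})$ and $\Pr(\vec{T}_{ij} \mid \vec{z}_{ij}, u)$ refer to a common latent labeling, and the derivation above goes through verbatim. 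I would close by remarking that the same identity could equivalently be obtained by normalizing the numerator so that $\sum_{\vec{z}_{ij}} \Pr(\vec{z}_{ij} \mid u) = 1$, which avoids computing $\Pr(u)$ explicitly.
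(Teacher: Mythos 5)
Your proof is correct and follows essentially the same route as the paper: the law of total probability over assignments $\vec{z}_{ij}$, Bayes' rule to express $\Pr(\vec{z}_{ij} \given u)$ in terms of the oracle output, and a second application of total probability (after alignment of the runs) to eliminate $\Pr(u)$. Your explicit emphasis on Lemma~\ref{lem: alignability} to justify that the label $u$ is consistent across runs is exactly the caveat the paper itself notes, so nothing is missing.
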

$ \Pr(\vec{T}_{ij} \given u)$ is a completely deconfounded distribution on which we can run the PC-algorithm. The full procedure is given in Algorithm~\ref{alg:FPCorrection}, in which we use breadth first search (BFS) to find $\vec{T}_{ij}, \vec{X}_1, \vec{X}_2$ such that they are separated by distance $2$ in $\G'$, followed by alignment and Lemma~\ref{lem:remove conditioning} in order to remove all of the false-positive edges from $\G_1$.
\begin{algorithm} 
    \caption{Phase II: Detection and correction of FP edges.}\label{alg:FPCorrection}
    \DontPrintSemicolon
    \KwIn{$\Pr(\vec{V})$ marginalized over $U$, a black box solver for $k$-MixProd, and $\G_1 = (\vec{V}, \vec{E}_1)$ from the output of Algorithm~\ref{alg:phase1}.}
 	\KwOut{$\G_2 = (\vec{V}, \vec{E}_2)$, an undirected skeleton of $\G$ and separating sets for nonadjacencies (vertices not in $\vec{E}_2$) .}
 	
		Start with $\vec{E}_2 \gets \vec{E}_1$. 		
 		
 		\For{each $\{V_i, V_j\} \in \vec{E}_1$}{
       	
       		Retrieve $\vec{T}_{ij},  \vec{X}_1, \vec{X}_2, \vec{Z}_{ij}$ using BFS.
       	
       		\For{each assignment $\vec{z}_{ij}$}{
       		
       			Run the $k$-MixProd solver on $\vec{T}_{ij}, \vec{X}_1, \vec{X}_2$ on $\Pr(\vec{V} \given \vec{z}_{ij})$. 
       		}
       	
       		Perform alignment of the $2^{\vec{z}_{ij}}$ runs to retrieve $\Pr(\vec{T}_{ij} \given \vec{Z}_{ij}, U)$.
       	
       	 Calculate $\Pr(\vec{T}_{ij} \given u)$ for every $u$ using Lemma~\ref{lem:remove conditioning}.
       	 
       	 Run PC or any other structure learning algorithm on $\Pr(\vec{T}_{ij} \given u)$ to find a separating set $\vec{C}_{ij}$ (or verify adjacency) for $V_i, V_j$.  If $V_i \indep V_j \given \vec{C}_{ij}, u$ for all $u$, remove $\{V_i, V_j\}$ from $\vec{E}_2$ and store $\vec{C}_{ij}$.
       	 }
\end{algorithm}

The runtime of Lemma~\ref{alg:FPCorrection} is dominated by the number of $k$-MixProd runs, since BFS is $O(\abs{\vec{V}} + \abs{\vec{E}_1})$, and the number of runs of $k$-MixProd are also linearly proportional to $\abs{\vec{E}_1}$ while also incurring the time costs of running $k$-MixProd.
\begin{lemma}\label{lem:phase2 runtime}
Algorithm~\ref{alg:FPCorrection} requires solving $k$-MixProd $\mathcal{O}((\Delta^2 \lg(k)) + \abs{\vec{E}})k 2^{\Delta^2})$ times.
\end{lemma}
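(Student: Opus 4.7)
The plan is to count the total number of $k$-MixProd invocations as the product of (i) the number of outer-loop iterations, one per edge of $\vec{E}_1$, and (ii) the number of inner-loop iterations, one per binary assignment $\vec{z}_{ij}$ to $\vec{Z}_{ij}$.

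For the outer loop, Lemma~\ref{lem: all fp in H} gives $\vec{E}_1 \setminus \vec{E} \subseteq \vec{H} \times \vec{H}$, and Observation~\ref{obs: size of h and degree} gives $\abs{\vec{H}} = O(\Delta^2 \log k)$, so $\abs{\vec{E}_1} \leq \abs{\vec{E}} + O(\Delta^4 \log^2 k) = O(\abs{\vec{E}})$ under the main theorem's vertex-count hypothesis. (If $\vec{E}$ is sparse enough to be dominated by $\abs{\vec H}^2$, this subsumes into an asymptotically larger $\abs{\vec V}$ factor that the main-theorem bound already accounts for elsewhere.)

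For the inner loop, I would bound $\abs{\vec{Z}_{ij}}$. The key structural observation is that Algorithm~\ref{alg:kmixprod_construction} draws $\vec{X}_1, \vec{X}_2$ from $\vec{V}' = \vec{V} \setminus (\vec{T}_{ij} \cup \Nb_2(\vec{T}_{ij}))$. Because $\vec{T}_{ij}$ contains the full 1-neighborhood of $\{V_i, V_j\}$ in $\G_1$ and all FP edges are confined to $\vec{H} \times \vec{H}$, whenever $\vec{H}$ is ``touched'' by $\{V_i, V_j\}$ it is already absorbed into $\vec{T}_{ij}$, so stripping $\Nb_2(\vec{T}_{ij})$ purges $\vec{H}$ entirely from $\vec{V}'$. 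Therefore every vertex of $\vec{X}_1 \cup \vec{X}_2$, together with every vertex in its 2-neighborhood, has $\G_1$-degree equal to its $\G$-degree, i.e.\ at most $\Delta$; in particular, each such 2-neighborhood has cardinality at most $\Delta^2$. The while-loop terminates once $2^{\abs{\vec{X}_1}} + 2^{\abs{\vec{X}_2}} \geq 2k + 2 - \min(k, 2^{\abs{\vec{T}_{ij}}})$, which (since $\abs{\vec{T}_{ij}} \geq 2$) forces $\abs{\vec{X}_1}, \abs{\vec{X}_2} = O(\log k)$.

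Combining, a union bound yields $\abs{\vec{Z}_{ij}} = O(\Delta^2 \log k)$. To hit the stated form $\mathcal{O}(k \abs{\vec{E}} 2^{\Delta^2})$ we need the sharper estimate $\abs{\vec{Z}_{ij}} \leq \Delta^2 + \log k$, so that the $\log k$ ``interior'' vertices of $\vec{X}_1 \cup \vec{X}_2$ contribute additively while they jointly expose only an $O(\Delta^2)$-size outer boundary. This is exactly the step I expect to be the main obstacle: a blunt union bound over the $O(\log k)$ vertices of each $\vec{X}_i$ multiplies their $\Delta^2$-size 2-neighborhoods and yields only $k^{O(\Delta^2)}$. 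Extracting the additive dependence seems to require arguing that the greedy minimum-2-neighborhood choices of $\vec{X}_1, \vec{X}_2$ force each new super-variable vertex to reuse nearly all of the same boundary, contributing only $O(1)$ truly fresh vertices to $\vec{Z}_{ij}$. Once that sharpening is in hand, multiplying the outer factor $O(\abs{\vec{E}})$ by the inner factor $2^{\abs{\vec{Z}_{ij}}} = O(k \cdot 2^{\Delta^2})$ yields the claimed total, and the vertex-requirement piece of the lemma is immediate from Lemma~\ref{lem: vertices needed for phase 2}.
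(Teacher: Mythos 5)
Your counting scheme (number of edges tested, times number of binary assignments to $\vec{Z}_{ij}$) is exactly the paper's, and the obstacle you flag is real --- but you should know that the paper's own proof does not overcome it either. The paper's proof asserts the blunt multiplicative bound you derived, namely $\abs{\vec{Z}_{ij}} \leq (\lg(k)+2)\Delta^2$ (at most $\lg(k)+2$ vertices in $\vec{X}_1 \cup \vec{X}_2$, each contributing a $2$-neighborhood of size at most $\Delta^2$), and then concludes ``$2k\,2^{\Delta^2}$ runs per edge.'' That conclusion is a non-sequitur: $2^{(\lg(k)+2)\Delta^2} = (4k)^{\Delta^2} = k^{\mathcal{O}(\Delta^2)}$, exactly the quantity you computed, whereas $2k\,2^{\Delta^2}$ would follow only from the additive bound $\abs{\vec{Z}_{ij}} \leq \lg(k)+1+\Delta^2$ that you correctly identify as the missing ingredient (the exponent appears to have been distributed as a sum rather than a product). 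The paper supplies no argument for the additive form; worse, its own Lemma~\ref{lem: vertices needed for phase 2} bounds $\abs{\Nb_2(\vec{X}_1)\cup\Nb_2(\vec{X}_2)}$ by $\Delta^2\abs{\vec{X}_1\cup\vec{X}_2}+\Delta\abs{\vec{H}} = \mathcal{O}(\Delta^3\log k)$, which is larger still. So your refusal to take this step is vindicated: what you found is a gap in the paper's proof, not a cleverness you lacked.

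That said, your two attempted patches do not work. First, the claim that deleting $\vec{T}_{ij}\cup\Nb_2(\vec{T}_{ij})$ ``purges $\vec{H}$ entirely from $\vec{V}'$'' is false: $\vec{H}$ consists of vertices with few non-descendants, which need not lie anywhere near $V_i,V_j$ in $\G_1$, so $\vec{V}'$ can still contain $\vec{H}$-vertices and their (possibly FP-inflated) neighborhoods --- this is precisely why the paper's Lemma~\ref{lem: vertices needed for phase 2} carries the $\Delta\abs{\vec{H}}$ term and invokes the greedy smallest-$2$-neighborhood selection. Second, the hoped-for sharpening that greedy choices make each new vertex of $\vec{X}_1\cup\vec{X}_2$ contribute only $\mathcal{O}(1)$ fresh boundary vertices is unavailable in general: in a $\Delta$-regular tree (or any locally tree-like graph), the $2$-neighborhood --- or even the Markov boundary --- of any $s$-vertex set grows like $s\cdot\Theta(\Delta^2)$ no matter how the set is clustered, since the per-vertex boundaries are essentially disjoint. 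With the conditioning sets produced by Algorithm~\ref{alg:kmixprod_construction}, the honest per-edge count is therefore $k^{\Theta(\Delta^2)}$, not $\mathcal{O}(k\,2^{\Delta^2})$. (Your outer-loop factor has the same looseness as the paper's: the loop runs over $\vec{E}_1 \supseteq \vec{E}$, and neither your hedge nor the paper shows $\abs{\vec{E}_1} = \mathcal{O}(\abs{\vec{E}})$, since the $\mathcal{O}(\Delta^4\log^2 k)$ worth of FP edges need not be dominated by $\abs{\vec{E}}$; but this is minor compared with the exponent issue.)
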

\begin{proof}
This algorithm requires running $k$-MixProd for every possible assignment to the conditioning set $\vec{z}_{ij}$, for which we have $\abs{\vec{Z}_{ij}} \leq (\lg(k) + 2) \Delta^2$ total binary variables.  This gives an upper bound of $2k 2^{\Delta^2}$ runs of $k$-MixProd for each edge in $\G_1$, of which there are no more than $\abs{\vec{E}} + \abs{\vec{H}}$ which is $\mathcal{O}(\Delta^2 \lg(k)) + \abs{\vec{E}})$.
\end{proof}

\section{Synthetic Data Details} \label{apx: synth data details}
\subsection{Structural Equation Setup}
SCMs are made up of a graphical structure and accompanying structural equations.  We focus our tests primarily on varying the graphical structure, using a standard set of structural equations on these graphs. Our $U$ are generated using a fair coin ($k=2$), and all other vertices are Bernoulli random variables with bias $p_V$ determined by $V$'s parents (including $U$):
\begin{equation}
p_V =\frac{1 + \sum_{W \in \Pa^{\G'}(V)} W}{\abs{\Pa^{\G'}(V)} + 2}.
\end{equation}
Structural equations of this form have a reasonable strength between vertices that is decreased relative to the in-degree. Since vertices can be permuted, this definition describes the broad class of distributions with sufficient separation.

\subsection{Non-Symmetric Structural Equations}
The structural equation used above is symmetric: it depends on the parents of $V$ only through their sum, not through which parents are active. To probe the nonparametric generality of our method, we additionally consider a non-symmetric family in which each parent is assigned an independent random weight. Concretely, for each vertex $V$ we draw weights $w_{VW} \sim \mathrm{Unif}[.5,1]$ for each $W \in \Pa^{\G'}(V)$ and a bias $b_V \sim \mathrm{Unif}[-1,1]$, and set
\begin{equation}
p_V = \sigma\!\left( b_V + \sum_{W \in \Pa^{\G'}(V)} w_{VW}\,(2W - 1) \right),
\end{equation}
where $\sigma$ is the logistic function. Unlike the symmetric form, this family depends on \emph{which} parents are active and is non-additive, exercising asymmetric dependence.

Figure~\ref{fig:nonsym} repeats the density sweep of Test~2 (Figure~\ref{fig:test2res}(a)) under this non-symmetric SEM, reporting the fraction of true edges and true non-edges recovered as a function of graph density; the structure-learning algorithm is unchanged, since it makes no parametric assumptions on the observables.
\begin{figure}[h]
\centering
\includegraphics[width=.4\textwidth]{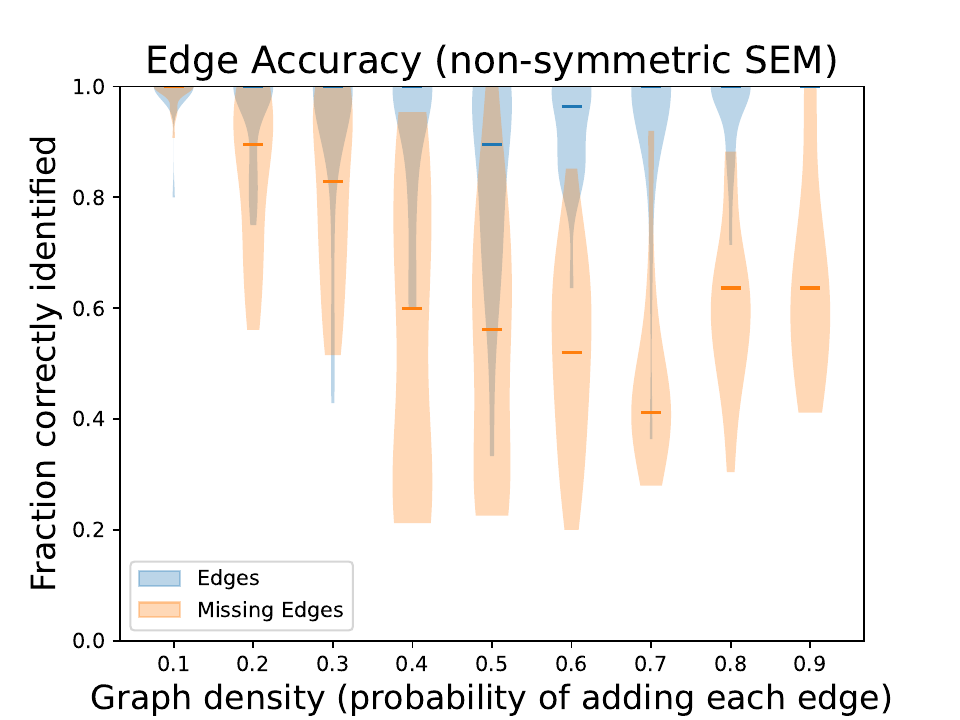}
\caption{Edge-recovery accuracy under the non-symmetric weighted-logistic SEM, as a function of graph density (compare Figure~\ref{fig:test2res}(a) for the symmetric SEM).}\label{fig:nonsym}
\end{figure}

\section{Repeating Test 2 with Incorrect $k$} \label{apx: additional experiments}
We repeated Test 2 with specifications of $k=1$ (too small) and $k=3$ (too large). The results are given in Figure~\ref {fig:test2_wrongk}. Overall, the method does not perform well -- giving close to complete graphs when $k$ is too small and close to empty graphs when $k$ is too large. This makes tuning $k$ rather easy, since a result of a very dense or very sparse graph suggests that a different $k$ should be used.

\begin{figure*}[h]
\centering
\includegraphics[width=.4\textwidth]{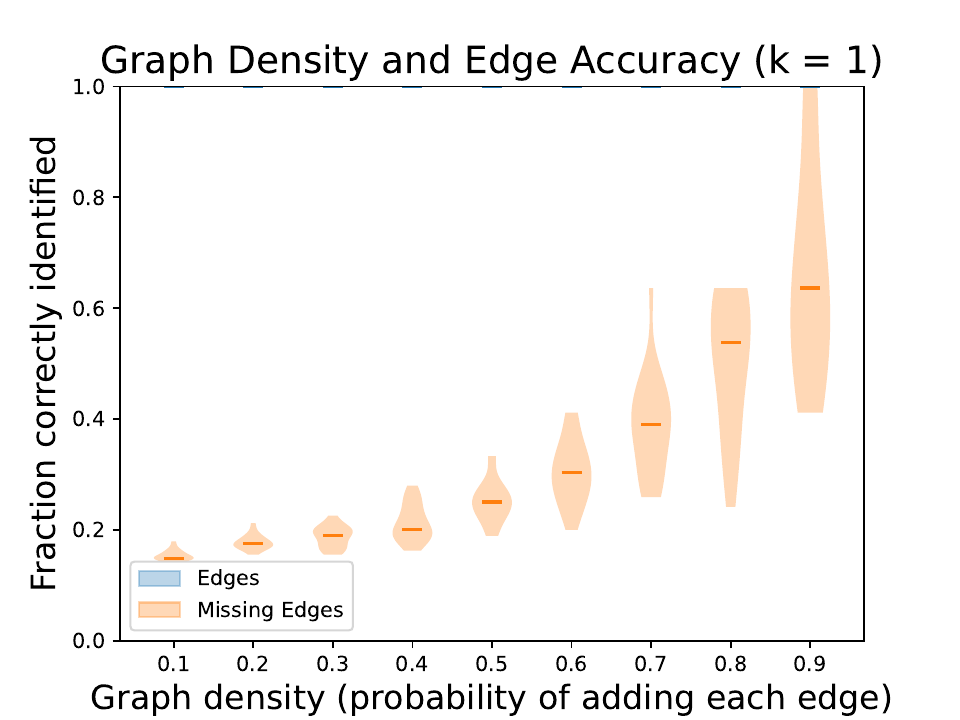}
\includegraphics[width=.4\textwidth]{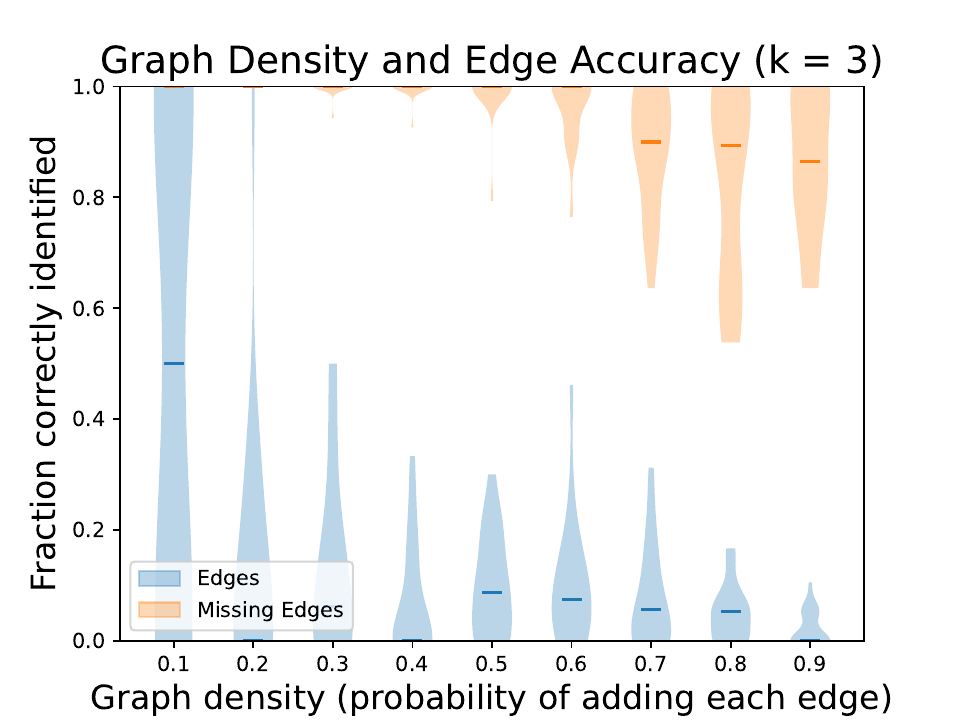}
\caption{The results of Test 2 with incorrectly specified $k$.} \label{fig:test2_wrongk}
\end{figure*}

\section{Deferred Proofs} \label{apx: deferred proofs}

\subsection{Proof of Lemma~\ref{lem:nnrank_dependence}}
\begin{proof}
We will drop the conditioning on $\vec{c}$ in this proof for simplicity. Consider the sum
\begin{equation}
\sigma_j := \sum_{i=1}^j \Pr(u_i) \mat{M}[X, Y \given u_i],
\end{equation}
and note that $\sigma_k = \mat{M}[X, Y]$.  Faithfulness with respect to $\G'$ tells us that there is some assignment, which we call $u_1$, such that $X \not \indep Y \given u_1$. Hence $\nnrank(\sigma_1) > 1$. 

Now,  we show inductively that $\nnrank(\sigma_{i})  = \nnrank(\sigma_{i-1}) + 1$ is a measure $1$ event for $i = 1, \ldots, k$.  Denote $\mat{M}[X, Y \given u_i] = v_iw_i^\top$ with column space $v_i$ drawn from a subspace with non-zero measure on $\R^{n}$. The column space of $\sigma_{i-1}$ is rank $\leq i-1 < m$, so it has measure zero on $\R^{m}$. Hence,  $v_i$ being in the column space of $\sigma_{i-1}$ is a measure $0$ event. We conclude that $\nnrank(\sigma_{i-1} + \mat{M}^{u_i}[X, Y]) = \nnrank(\sigma_{i-1}) + 1$ with measure $1$. Inducting on $i$ gives $\nnrank(\sigma_k) > k$ with measure $1$.
\end{proof}

\suppress{
\begin{proof}
The algorithm designates vertices in $\vec{V}$ into the following sets and succeeds so long as those sets are disjoint.

\begin{enumerate}
\item $\vec{X}_1$ and $\vec{X}_2$
\item $\Nb_2(\vec{X}_1), \Nb_2(\vec{X}_2)$
\item $\vec{T}_{ij}$
\item $\Nb_1(\vec{T}_{ij})$
\end{enumerate}

$\abs{\vec T_{ij}} \geq 2$ and $k \geq 2$, so the number of vertices added to $\vec{X}_1, \vec{X}_2$ in the loop of Algorithm~\ref{alg:kmixprod_construction} is at most $\lceil\lg(2k+2-2) \rceil < \lg(k) + 2$. 

The rest of the sets are dependent on distance 1 and distance 2 neighborhoods of vertices, which depend on the degree of the graph. Recall that our degree bound $\Delta$ on $\G$ only applies to $\vec{V} \setminus \vec{H}$ in $\G_1$ because of FP edges.  For all vertices in $\vec{H}$, we have a degree bound of $\abs{\vec{H}} \leq (2 + \Delta^2) \lceil \lg(k) \rceil$ as given by Lemma~\ref{obs: size of h and degree}.

The vertices in $\vec{T}_{ij}$ clearly might use the larger degree bound --- we must have $V_i, V_j \in \vec{H}$ in order to recover FP edges.  However, by forming sets $\vec{X}_1, \vec{X}_2$ with greedily small distance 2 neighborhoods, we hope that we will have kept them mostly in within a sparse set $\vec{L}$ such that $\abs{\Nb^{\G_1}_{2}(L)} \leq \Delta^2$ for all $L \in \vec{L}$.  We need to ensure that this $\vec{L}$ has enough vertices to form $\vec{X}_1, \vec{X}_2$.

We observe that a vertex $V$ which is at least distance $2$ from $\vec{H}$ contains no $\vec{H}$ vertices in its 1-neighborhood, and thus has a $2$ neighborhood upper bounded by $\Delta^2$.  Call this set $\vec{L}^* \subseteq \vec{L}$.  We know there are no more than $\Delta \abs{\vec{H}}$ vertices that are distance 1 from $\vec{H}$. Hence, we can lower bound the size of $\vec{L}^*$ using the complement of $\vec{H}$ and its $1$-neighborhood.

\begin{equation}
\abs{\vec{L}} \geq \abs{\vec{L}^*} \geq \abs{\vec{V}} - \Delta \abs{\vec{H}}
\end{equation}

$\vec{T}_{ij}, \vec{X}_1, \vec{X}_2$ could have neighborhoods that encroach on $\vec{L}^*$, but such encroachments always do so with bounded degree.  The sizes of their relevant neighborhoods are
\begin{equation}
\begin{aligned}
\abs{\vec{L}^* \cap (\Nb_2(\vec{X}_1) \cup  \Nb_2(\vec{X}_2))} &\leq \Delta^2(\lg(k) + 2)\\
\abs{\vec{L}^* \cap \Nb_1(\vec{T}_{ij}) \cup \vec{T}_{ij}} &\leq 2\Delta^2.
\end{aligned}
\end{equation}
The second inequality comes from the fact that no intersection of $\Nb_1(\vec{T}_{ij})$ with $\vec{L}$ can come from the neighborhood of $V_i$ or $V_j \in \vec{H}$, since we have limited $L$ to being distance 2 from $\vec{H}$.

Putting all of this together, the size of $\vec{V}$ needed to guarantee large enough $\vec{L}^*$ is
\begin{equation}
\vec{V} \geq \underbrace{\Delta \abs{\vec{H}}}_{\text{not a part of }\vec{L}^*} + \abs{\vec{X}_1 \cup \vec{X}_2} + \abs{\vec{L}^* \cap (\Nb_2(\vec{X}_1) \cup  \Nb_2(\vec{X}_2))} +  \abs{\vec{L}^* \cap \Nb_1(\vec{T}_{ij}) \cup \vec{T}_{ij}}.
\end{equation}
The bound is dominated by the first term, which is $\mathcal{O}(\Delta^3 \log(k))$.
\end{proof}}

\subsection{Proof of Lemma~\ref{lem: sepset exists if we have enough vertices}} \label{apx: formal treatment of early vertices}
\begin{proof}
We can form $\vec{S}_i^+$ out of $V_i$ and $\lceil \lg(k + 1) \rceil - 1$ arbitrary other vertices from $\vec{A}_{ij}$. Now, consider subgraph on $\vec{A}_{ij}$, denoted $\G_1[\vec{A}_{ij}]$. Notice that d-separating other vertices in $\vec{A}_{ij}$ from $\vec{S}_i^+$ does not require conditioning on any descendants of $V_i$ or $V_j$. As such, we define a separating set on this subgraph,
\begin{equation}
\vec{C}:= \Mb^{\G[\vec{A}_{ij}]}(\vec{S}_i^+).  
\end{equation}
Since we know $\abs{C} \leq \Delta^2 \lceil \lg(k+1) \rceil$, we need at least $\lceil \lg(k + 1) \rceil-1$ vertices in $\vec{A}_{ij}$ left to join with $V_j$ and make $\vec{S}_j^+$. In summary, we have:
\begin{itemize}
    \item $\lceil \lg(k + 1) \rceil-1$ vertices to augment $\vec S_i^+$.
    \item $\lceil \lg(k + 1) \rceil-1$ vertices to augment $\vec S_j^+$.
    \item $\Delta^2 \lceil \lg(k+1) \rceil$ vertices to d-separate $S_i^+$ and $S_j^+$.
\end{itemize}
The sum of these three completes the proof, since $\vec S_i^+$ was chosen arbitrarily and $\vec S_j^+$ was formed with the leftover vertices.
\end{proof}

\subsection{Proof of Lemma~\ref{obs: size of h and degree}}
Let $a = (2 + \Delta^2) \lceil \lg(k + 1) \rceil - 2$ be equal to the number of non-descendants needed to define $\vec{H}$ (from Definition~\ref{def: early vertices}). Now, we need to show that there are no more than $a$ vertices with fewer than $a$ non-descendants. Index $\vec{V}$ according to a topological ordering and and observe that vertices $V_1, \ldots, V_a$ are non-descendants of vertices $V_{a+1}, \ldots V_{\abs{\vec{V}}}$
. Hence $\vec{H} \subseteq \{V_1, \ldots, V_a\}$ meaning that the cardinality of $\vec{X}$ is no more than $a$.

\subsection{Proof of Lemma~\ref{lem: all fp in H}}

\begin{proof}
A convenient consequence of Lemma~\ref{lem: sepset exists if we have enough vertices} is that it guarantees the existence of IPAs everywhere except within a small subset of vertices.  Let $\overline{\De}(V) := \vec{V} \setminus \De(V)$ be the ``non-descendants'' of $V$. Note that $\vec{A}_{ij} = \overline{\De}(V_i) \cup \overline{\De}(V_j)$. This implies that
\begin{equation} \label{eq: bound A using DE}
\abs{\vec{A}_{ij}}  \geq \max(\abs{\overline{\De}(V_i)}, \abs{\overline{\De}(V_j)}).
\end{equation}
Hence, so long as at least one vertex has enough non-descendants, $\vec{A}_{ij}$ will be large enough to form an IPA.  This set of vertices with enough non-descendants corresponds to the complement of the early vertices.
\end{proof}

\subsection{Proof of Lemma~\ref{lem: tij has separating set}}
\begin{proof}
$\vec{T}_{ij}$ contains both $\Pa(V_i)$ and $\Pa(V_j)$, so Lemma~\ref{lem:parents d-sep} tells us that we contain a separating set.
\end{proof}

\end{document}